\documentclass[11pt,letterpaper]{article}
\usepackage{url,color,paralist,xspace}            
\usepackage{authblk}
\usepackage{fullpage,times}
\usepackage{booktabs}       
\usepackage{amsfonts}       
\usepackage{nicefrac}       
\usepackage{microtype}      
\usepackage{amsmath, amssymb, amsthm}
\usepackage{graphicx, subfigure}
\usepackage{algorithm}
\usepackage[noend]{algorithmic}
\usepackage{tikz}
\usepackage{multirow}
\usepackage{adjustbox}
\usepackage{wrapfig}
\usetikzlibrary{calc, decorations.pathreplacing}
\def\layersep{1cm}
\tikzset{neuron/.style={circle,thick,fill=black!25,minimum size=12pt,inner sep=0pt},
    input neuron/.style={neuron, draw,thick, fill=gray!50},
    hidden block/.style={circle,thick,fill=black!25,minimum size=25pt,inner sep=0pt,draw,thick, fill=green!50},
    RB block/.style={circle,thick,fill=black!25,minimum size=25pt,inner sep=0pt,draw, thick, fill=blue!50},
    hidden neuron/.style={circle,thick,fill=black!25,minimum size=10pt,inner sep=0pt,draw,thick, fill=green!50},
    RB neuron/.style={circle,thick,fill=black!25,minimum size=10pt,inner sep=0pt,draw, thick, fill=blue!50},
    output neuron/.style={neuron,draw,thick, fill=red!50},
    hoz/.style={rotate=-90}}
\def \tw{\tilde{w}}
\def \tr{\tilde{r}}
\def \tphi{\tilde{\phi}}
\def \mP{\mathbb P}
\def \mR{\mathbb R}
\def \mE{\mathbb E}
\def \mK{\mathcal K}
\def \mM{\mathcal M}
\def \mN{\mathcal N}

\def \mS{\mathcal S}
\def \mF{\mathcal F}

\def \mQ{\mathcal Q}
\def \balpha{{\bf \alpha}}

\def \bphi{\boldsymbol{\phi}}
\def \bPhi{\boldsymbol{\Phi}}
\def \bmu{\boldsymbol{\mu}}
\def \bSigma{\boldsymbol{\Sigma}}
\def \KL{{\bf \rm KL}}
\def \ELBO{{\bf \rm ELBO}}
\def \IP{{\bf \rm IP}}
\def \I{{\bf \rm I}}
\def \b0{{\bf 0}}
\def \bI{{\bf I}}
\def \be{{\bf e}}
\def \bu{{\bf u}}
\def \bU{{\bf U}}
\def \bZ{{\bf Z}}
\def \bz{{\bf z}}

\def \by{{\bf y}}
\def \bX{{\bf X}}
\def \bx{{\bf x}}
\def \bdf{{\bf f}}
\def \bF{{\bf F}}
\def \bw{{\bf w}}
\def \bW{{\bf W}}
\def \bv{{\bf v}}
\def \bV{{\bf V}}
\def \bm{{\bf m}}
\def \bS{{\bf S}}

\def \bz{{\bf z}}

\let\OldS\S
\renewcommand{\S}{\OldS\xspace}
\theoremstyle{definition}
\newtheorem{definition}{Definition}
\newtheorem{theorem}{Theorem}

\newtheorem{proposition}{Proposition}

\setlength\parindent{0pt}
\linespread{1.25}
\title{Building Bayesian Neural Networks with Blocks:\\
On Structure, Interpretability and Uncertainty}
\author[1]{Hao Henry Zhou}
\author[2]{Yunyang Xiong}
\author[2,3]{Vikas Singh}
\affil[1]{Department of Statistics, University of Wisconsin-Madison}
\affil[2]{Department of Biostatistics $\&$ Medical Informatics, University of Wisconsin-Madison}
\affil[3]{Department of Computer Sciences, University of Wisconsin-Madison}
\date{}

\begin{document}
\maketitle
\graphicspath{{figs/}}

\begin{abstract}
  We provide simple schemes to build Bayesian Neural Networks (BNNs), block by block, inspired by a recent
  idea of computation skeletons. We show how by adjusting the types of blocks that are used within the computation skeleton,
  we can identify interesting relationships with Deep Gaussian Processes (DGPs), deep kernel learning (DKL),
  random features type approximation and other topics. We give strategies to
  approximate the posterior via doubly stochastic variational inference for such models which yield uncertainty estimates.
  We give a detailed theoretical analysis and point out extensions that may be
  of independent interest. 
  As a special case, we instantiate our procedure to define a Bayesian {\em additive} Neural network -- 
  a promising strategy to identify statistical interactions and has direct benefits for obtaining 
  interpretable models. 
\end{abstract}

\section{Introduction}\label{sec:intro}

Bayesian Neural Networks (BNNs) generally refer to a class of algorithms that treat neural network models in a Bayesian manner \cite{Her15, Gra11, Nea12}.
Consider the loss function of a well-defined neural network model. 
In optimizing this loss, one often seeks to find a parameter optimum, say $\theta^\ast$ -- 
a point estimate of the weights. The Bayesian
perspective, instead, takes into account the inherent uncertainty in the estimates.
To do so, BNNs introduce priors on the network weights: learning then corresponds to
approximating the posterior,
 i.e., $p(\theta|{\rm data})$ via probabilistic backpropagation \cite{Her15}, 
variational inference \cite{Gra11, Gal16}, expectation propagation \cite{Jyl14} and so on. When uncertainty estimates are important,
as is the case in many applications, BNNs are 
well suited. On the other hand, approximating the posterior is challenging
and further, the choice or design of the prior
may not be straightforward. \\

BNNs 
are motivated by a probabilistic interpretation of
deep neural network learning,
which also underlies a related yet distinct body of
work known as Deep Gaussian Processes (DGPs).
DGPs implement a deep probabilistic {\em non-parametric model} for compositions of functions:
an extension of Gaussian processes,
but with a multi-layer structure \cite{Dam13}.
These models inherit the expressive power of GPs and provide uncertainty estimates through the posterior distribution, similar to BNNs. 
But similar to BNNs, calculating the posterior of DGP is difficult.
Also, DGPs are based on GPs, so 
we must solve for the inverse of the kernel matrix: a problem
for large datasets.
There are recent papers 
devoted to overcoming the difficulties in training DGPs, 
which roughly fall in three classes. {\bf First}, one may incorporate the 
deep neural network structure for designing a ``deep'' kernel for GP \cite{Dan16, Wil16}. 
This offers the benefits of a deep structure but avoids 
the complexity of a deep GP. 
The {\bf second} set of methods come up with a specialized design of BNNs \cite{Gal16, Cut17}: 
using doubly stochastic variational inference, these results 
inherit properties of modern innovations in deep learning
in that they can
use mini-batch training, backpropagation, dropout, and 
automatic differentiation. 
The authors in \cite{Gal16, Cut17} show that such specialized designs of BNNs 
can actually serve as approximations of DGPs--using specific kernels based on the activation functions of interest. But these ideas, from the DGP point of view, only apply to 
a limited class of kernels. A {\bf third} line of attack advocates an approximation for the DGP posterior using ``inducing points'' \cite{Bui16, Dai15, Hen14}. In general, this scheme is suitable 
for {\em any} kernel, but poses challenges because solving for the inverse of kernel matrix in a deep structure is difficult--therefore, 
those methods often make strong independence and Gaussianity assumptions. 
But a recent result in \cite{Sal17} showed -- surprisingly --
that we do {\em not} need to force independence or Gaussianity between 
the layers and the algorithm can, in fact, 
be trained using mini-batch training, doubly stochastic variational inference and backpropagation for large datasets.
Interestingly, we find that 
the reason mini-batch training, backpropagation, and taking the correlations between layers into account, 
also common in the second line of work above, can be shown to work in \cite{Sal17} 
is because the approach actually produces a posterior approximation that belongs to a broad class of BNNs which have a kernel-type structure.
This broad class of BNNs have a nice relationship with DGP, 
maintains correlations between layers and can be trained 
based on doubly stochastic variational inference. 
This class which we identify also nicely ties to recent work on deep kernel learning \cite{Wil16}, deep random features \cite{Dan16} in the first line of work as well as MC dropout \cite{Gal16} and random feature expansion \cite{Cut17}.\\

{\bf Beyond uncertainty estimates: assuming structure on the function class.}
The discussion above focuses only on uncertainty estimates for the parameters of the model.
But in many applications, the interpretability of the model is also 
important. In statistics, to get an interpretable model, we
often impose assumptions on the structure of the function class. For example,
an additive structure may pertain to statistical interactions whereas
a hierarchical structure could help identify the influence of individual-level
effects in a mixed-effects model.
In any case, if the model is simple, then making an assumption of structure (e.g., hierarchical, additive),
yields two benefits: interpretability {\em as well as} uncertainty estimates (based on
certain distributional assumptions).
This suggests that to investigate whether 
the BNN and DGP results can be made more
interpretable, an assumption of structure on the function class may
be a good starting point. 
In fact, within the (non-Bayesian) line of work on DNNs, 
so-called capsule structures \cite{sabour2017dynamic} derive representations
which respect spatial hierarchies between objects. New results \cite{bruna2013invariant} relating 
multi-resolution analysis (MRA) to CNNs have also appeared \cite{mallat2016understanding, angles2018generative}. 
So, can we have
strategies for designing deep neural networks which are fundamentally built with
{\em structure} and {\em interpretability} in mind, but are also easily amenable to {\em uncertainty
estimation}?\\

{\bf Building Bayesian neural networks by blocks.}
Our goal is to incorporate structural assumptions on BNNs.
We describe
a procedure based on the so-called ``computation skeleton'' idea in \cite{Dan16}, used
to study the relationship between 
neural networks and kernels.
Here, our rationale for using computation skeletons is to design a BNN, block by block, that can efficiently approximate the
posterior of DGPs. This scheme offers the flexibility to choose between different structures and/or
uncertainty estimation schemes.
It retains all useful empirical properties
such as mini-batch training, works on large-scale datasets and yields the 
expressive power of DGPs with kernels.
Our results also provide a deeper understanding of the relation between BNN and DGP.\\

{\bf Structural assumption and an example case for interpretability.} 
With the computation skeleton framework for BNN in hand, it will be 
easy to design approximation schemes for the DGP posterior.
While the {\em uncertainty of prediction accuracy}, has been studied 
for DNNs \cite{Cut17, Sal17, Gal16}, 
another quantity, {\em statistical interactions}, 
important for interpretability, 
has received little attention \cite{Tsa17}. 
If an output depends on several features,
one is often interested in changing some features to evaluate how it affects the response. 
In doing so, we must guarantee that other ``uncontrolled'' features do {\em not} influence the response. 
This confound is called {\em interaction}: the simultaneous influence of several features on the outcome is {\em not additive} and
the features may jointly affect the outcome.
Interpretability means understanding how predictors influence the outcome. But failing to detect statistical interactions causes
problems in
inferring the features' influence (e.g., the Simpson's paradox). 
A general network architecture permits
{\em all} features to interact, without the ability to control for the nuisance terms.
In statistics, we may use a fully additive statistical model with ANOVA decomposition.
Similarly, we propose an additive structure on the network and apply post-training ANOVA decomposition to detect statistical interactions.
Describing how the neural network architecture is built, with blocks, additively, is where the computation graph idea is essential --
it yields a Bayesian 
{\em additive} neural network (BANN).
Note that statistical interactions 
are different from other interpretability focused results in
computer vision, specifically on {\em relevance} or {\em attribution}\cite{Zei14, Shi17learning, Anc18towards, Sun17axiomatic}.
For example, gradient-based methods provide pixel-importance salience maps.
While {\em local} attribution, which most works focus on, describes how the network response
changes when we infinitesimally perturb the input sample, {\em global} attribution captures the marginal effect of a feature on the
network output with respect to a baseline. We will show that global attribution can be obtained from our scheme as well.

\subsection{Preliminaries}\label{sec:pre}
In this section, we briefly review Deep Gaussian process and variational inference schemes that are often used to approximate the posterior distribution
to setup the rest of our presentation.\\

{\bf Gaussian processes (GP) and deep Gaussian processes (DGPs).}
Consider the inference task for a stochastic function $f:\mR^p\rightarrow \mR$, given a likelihood $p(y|f)$ and a set of $n$ observations
$\by = (y_1,...,y_n)^T\in \mR^{n\time 1}$ at locations $\bX = (\bx_1,...,\bx_n)^T\in \mR^{n\times p}$.
We place a GP prior on the function $f$ that models all function values as jointly Gaussian, with a 
covariance $\mK:\mR^p\times\mR^p\rightarrow \mR$. We use the notation $\bdf = f(\bX)$ and $\mK(\bX, \bX)_{ij} = \mK(\bx_i, \bx_j)$. Then, the joint density for $\by$ and $\bdf$
for a single-layer Gaussian process (GP) is 

$$p(\by,\bdf)=p(\bdf;\bX)\prod_{i=1}^n p(y_i|f_i),$$

where $\bdf|\bX\sim N(0, \mK(\bX,\bX))$ and $y_i|f_i\sim N(f_i,\delta^2)$.\\
%

Then for $L$ vector-valued stochastic functions denoted as $\mF^\ell$, a Deep Gaussian Process (DGP) \cite{Dam13} defines a prior recursively on $\mF^1,...,\mF^L$. 
The prior on each function $\mF^\ell$ is an independent GP in each dimension, with input locations given by the function values at the previous layer: the outputs of GPs at layer $\ell$ are $\{\bF^\ell_{.j}\}_{j=1}^d$ and the corresponding inputs are $\bF^{\ell-1}$.
The joint density of the process is 

$$p(\by,\{\bF^\ell\}_{\ell=1}^L)=\prod_{i=1}^n p(y_i|f^L_i)\prod_{\ell=1}^L p(\bF^\ell|\bF^{\ell-1}),$$

where $\bF^0=\bX$, $\bF^\ell\in \mR^{n\times d_\ell}$ for $0<\ell\leq L$. Here, $\bF^\ell_{.j}|\bF^{\ell-1}\sim N(0,\mK^{\ell}_j(\bF^{\ell-1},\bF^{\ell-1}))$ for $1\leq j \leq d_\ell$, $0<\ell\leq L$. \\

{\bf Variational inference (VI) for Bayesian models.}
Consider the joint density of the latent variables $\bdf = \{f_i\}_{i=1}^m$ and the observations $\by = \{y_i\}_{i=1}^n$, 

$$p(\bdf,\by) = p(\bdf)p(\by|\bdf).$$ 

We know that inference in any Bayesian model amounts to conditioning on the data and computing the posterior $p(\bdf|\by)$. In models like DGP, this calculation is difficult and so, we use approximate inference.
A popular strategy is variational inference (VI) \cite{Ble17} which requires 
specifying a family of {\em approximate} densities $\mQ$.
Our goal is to find the member $q^\ast$ of that family which minimizes the Kullback-Leibler (KL) divergence to the exact posterior, 

$$q^\ast(\bdf) = \arg\min_{q(\bdf)\in\mQ} \KL(q(\bdf)||p(\bdf|\by)).$$ 

Instead of minimizing the KL divergence,
one maximizes the evidence lower bound (ELBO), 

$$\ELBO(q) = \mE_{q(\bdf)}[\log p(\by|\bdf)] - \KL(q(\bdf||p(\bdf)).$$
%

The first term is an expected likelihood, which encourages the densities to place their mass on configurations of the latent variables which explain the observed data. The second term is the negative KL divergence between the variational density and the prior, which encourages densities to lie close to the prior.
For DGP type models, VI is a preferred strategy to approximate the posterior. 

\section{Building Bayesian neural networks: computation skeleton and blocks}\label{sec:skeleton}
We first define the computation skeleton and show how it can lead to a BNN. Then, we show that the constructed BNNs can be seen
as a VI approximation for the DGP posterior. 
Finally, we discuss how to reconstruct other DGP approximations or BNNs \cite{Gal16, Wil16, Dan16} in our framework. \\


{\bf What is a computation skeleton?}
Computation skeleton \cite{Dan16} is a structure to compactly describe a feed-forward computation 
structure from the inputs to the outputs. 
Formally, a computation skeleton $\mS$ is a multi-layer graph with the bottom nodes representing inputs, the 
top nodes representing outputs and non-input nodes are labeled by activations $\sigma$. 
In \cite{Dan16}, this idea was used to study a family 
of DNNs and their properties: it was shown that DNNs can be seen 
as the realization of certain types of structures and their dual kernels.
In fact, every $\mS$ defines a specific NN structure: 
Fig. \ref{fig:blocks}(a) shows a two layer fully connected NN, see \cite{Dan16} for more examples.
Here, we reuse the name but define a slightly different $\mS$ to design BNNs. For notational simplicity, we consider $\mS$'s with a single output.\\

{\bf What are blocks?}
To construct BNNs from a computation skeleton $\mS$, we also need two additional components, which we call ``blocks''. 
Our first type of block is a {\bf function block} denoted as $FB(\mP_\bv, r, d)$, which allows every ``node'' in 
$\mS$ to replicate $d$ times. This will help us in defining Bayesian priors and posteriors. 
We setup FB as a one layer NN where the inputs nodes and output nodes are fully connected.
All incoming edges to the output node $f_j$ as in Fig. \ref{fig:blocks}(b) form a vector $\bv_j$.
The set of $\bv_j$'s for $1\leq j\leq d$ i.i.d. follow the 
distribution $\mP_\bv$ on $\mR^r$. 
$FB(\mP_\bv, r, d)$ simply takes the inputs $\bphi = (\phi_1,...,\phi_r)^T$ and outputs a $d$-dimension vector $\bdf$ with $f_j = \bphi^T \bv_j$ for $1\leq j\leq d$. 
Our second type of block is a {\bf random feature block} denoted as 
$RB(\mP_\bw, d, r, \sigma_{\mK})$, which we use to construct random feature approximations 
for kernels to leverage the expressive power of DGP. 
We setup RB as a one layer NN with random weights where the inputs nodes and outputs are fully connected. 
All incoming edges to the output node $\phi_j$ as in Fig. \ref{fig:blocks}(c) 
form a vector $\bw_j$. The set of $\bw_j$'s for $1\leq j\leq r$ 
follow the distribution $\mP_\bw$ on $\mR^d$ for $1\leq j\leq d$. 
$RB(\mP_\bw, d, r, \sigma_{\mK})$ takes the inputs $\bx = (x_1,...,x_d)^T$ and outputs a 
$r$-dimension vector $\bphi$ with $\phi_j = \frac{1}{\sqrt{r}}\sigma_{\mK}(\bx^T\bw_j)$ for $1\leq j\leq r$.\\

\begin{figure}
	\centering
	\subfigure[][$\mS$]{
		\resizebox{.22\linewidth}{2cm}{
		\begin{tikzpicture}[-,draw=black, very thick, node distance=\layersep,transform shape,rotate=90]  
\tikzstyle{annot} = [text width=4em, text centered]

\foreach \name / \y in {1/1,2/2,3/3,4/4}
\node[input neuron, hoz] (I-\name) at (0,-\name) {};


\foreach \name / \y in {1/1}
\node [hidden neuron, hoz] (H-\name) at (\layersep,-\name-1.5) {}; 
\node[hoz] (Hsigma) at (\layersep,-1.5) {\Large{$\sigma$}};
\foreach \name / \y in {1/1}
\node [output neuron, hoz] (J-\name) at (2*\layersep,-\name-1.5) {};
\node[hoz](Jsigma) at (2*\layersep,-1.5) {\Large{$\sigma$}};
\foreach \source in {1,2,3,4}
	\foreach \dest in {1}
		\path (I-\source.north) edge (H-\dest.south);
		
\foreach \source in {1}
	\foreach \dest in {1}
		\path (H-\source.north) edge (J-\dest.south);
		
\end{tikzpicture}}}
	\subfigure[][$FB$]{
		\resizebox{.22\linewidth}{2cm}{
		\begin{tikzpicture}[-,draw=black, very thick, node distance=\layersep,transform shape,rotate=90]  
\tikzstyle{annot} = [text width=4em, text centered]

\foreach \name / \y in {1/1,2/2,3/3}
\node[RB block, hoz] (I-\name) at (0,-\name*1.5) {\Large{$\phi_\y$}};


\foreach \name / \y in {1/1, 2/2}
\node [hidden block, hoz] (H-\name) at (2*\layersep,-\name*1.5-0.5*1.5) {\Large{$f_\y$}}; 



\foreach \source in {1,2,3}
	\foreach \dest in {1,2}
		\path (I-\source.north) edge (H-\dest.south);


\end{tikzpicture}}}
	\subfigure[][$RB$]{
		\resizebox{.22\linewidth}{2cm}{
		\begin{tikzpicture}[-,draw=black, very thick, node distance=\layersep,transform shape,rotate=90]  
\tikzstyle{annot} = [text width=4em, text centered]

\foreach \name / \y in {1/1,2/2}
\node[hidden block, hoz] (I-\name) at (0,-\name*1.5-0.5*1.5) {\Large{$x_\y$}};


\foreach \name / \y in {1/1, 2/2, 3/3}
\node [RB block, hoz] (H-\name) at (2*\layersep,-\name*1.5) {\Large{$\phi_\y$}}; 



\foreach \source in {1,2}
	\foreach \dest in {1,2,3}
		\path (I-\source.north) edge (H-\dest.south);


\end{tikzpicture}}}
	\subfigure[][BNN $\mN(\mS)$]{
		\resizebox{.22\linewidth}{2cm}{
		\begin{tikzpicture}[-,draw=black, very thick, node distance=\layersep,transform shape,rotate=90]  
\tikzstyle{annot} = [text width=4em, text centered]

\def\layersep{0.75cm}
\def\nodesep{1cm}
\def\leftsidelabels{0.9}

\foreach \name / \y in {1/1,2/2,3/3,4/4}
\node[input neuron, hoz] (I-\name) at (0,-\name*\nodesep) {};



\foreach \name / \y in {1/1, 2/2, 3/3}
\node [RB neuron, hoz] (H-\name) at (0.75*\layersep,-\name*\nodesep-0.5*\nodesep) {}; 
\node[hoz] (Hsigma) at (0.75*\layersep,-\leftsidelabels) {\Large{$\sigma_{\mK}$}};

\foreach \name / \y in {1/1, 2/2}
\node [hidden neuron, hoz] (J-\name) at (1.5*\layersep,-\name*\nodesep-1*\nodesep) {};
\node[hoz] (Jsigma) at (1.5*\layersep,-\leftsidelabels) {\Large{$\sigma$}};

\foreach \name / \y in {1/1, 2/2, 3/3}
\node [RB neuron, hoz] (K-\name) at (2.25*\layersep,-\name*\nodesep-0.5*\nodesep) {}; 
\node[hoz] (Ksigma) at (2.25*\layersep,-\leftsidelabels) {\Large{$\sigma_{\mK}$}};

\foreach \name / \y in {1/1}
\node [output neuron, hoz] (L-\name) at (3*\layersep,-\name*\nodesep-1.5*\nodesep) {};
\node[hoz] (Lsigma) at (3*\layersep,-\leftsidelabels) {\Large{$\sigma$}};

\foreach \source in {1,2,3,4}
	\foreach \dest in {1,2,3}
		\path (I-\source.north) edge (H-\dest.south);
		
\foreach \source in {1,2,3}
	\foreach \dest in {1,2}
		\path (H-\source.north) edge (J-\dest.south);
		
\foreach \source in {1,2}
	\foreach \dest in {1,2,3}
		\path (J-\source.north) edge (K-\dest.south);
		
\foreach \source in {1,2,3}
	\foreach \dest in {1}
		\path (K-\source.north) edge (L-\dest.south);
		
\node[hoz] at (0.75*\layersep,-4.2){\Large{$\phi^1$}};
\node[hoz] at (1.5*\layersep,-4.2){\Large{$f^1$}};
\node[hoz] at (2.25*\layersep,-4.2){\Large{$\phi^2$}};
\node[hoz] at (3*\layersep,-4.2){\Large{$f^2$}};
\end{tikzpicture}}}
	\subfigure[][MC dropout]{
		\resizebox{.22\linewidth}{2cm}{
		\begin{tikzpicture}[-,draw=black, very thick, node distance=\layersep,transform shape,rotate=90]  
\tikzstyle{annot} = [text width=4em, text centered]

\foreach \name / \y in {1/1,2/2,3/3,4/4}
\node[input neuron, hoz] (I-\name) at (0,-\name) {};


\foreach \name / \y in {1/1,2/2}
\node [hidden neuron, hoz] (H-\name) at (\layersep,-\name-1) {}; 
\node[hoz] (Hsigma) at (\layersep,-1.5) {\Large{$\sigma$}};
\foreach \name / \y in {1/1,2/2}
\node [output neuron, hoz] (J-\name) at (2*\layersep,-\name-1) {};
\node[hoz](Jsigma) at (2*\layersep,-1.5) {\Large{$\sigma$}};
\foreach \source in {1,2,3,4}
	\foreach \dest in {1,2}
		\path (I-\source.north) edge (H-\dest.south);
		
\foreach \source in {1,2}
	\foreach \dest in {1,2}
		\path (H-\source.north) edge (J-\dest.south);

\draw[fill=white, fill opacity=0.8] (0.25,-0.5) rectangle (0.75,-4.5) node[fill opacity =1,hoz,pos=.5] {\bf FB};
\draw[fill=white, fill opacity=0.8] (1.25,-0.5) rectangle (1.75,-4.5) node[fill opacity =1,hoz,pos=.5] {\bf FB};
		
\end{tikzpicture}}}
	\subfigure[][DRF]{
		\resizebox{.22\linewidth}{2cm}{
		\begin{tikzpicture}[-,draw=black, very thick, node distance=\layersep,transform shape,rotate=90]  
\tikzstyle{annot} = [text width=4em, text centered]

\foreach \name / \y in {1/1,2/2,3/3,4/4}
\node[input neuron, hoz] (I-\name) at (0,-\name) {};


\foreach \name / \y in {1/1}
\node [output  neuron, hoz] (H-\name) at (2*\layersep,-\name-1.5) {}; 
\node[hoz] (Hsigma) at (2*\layersep,-1.5) {\Large{$\sigma$}};
\foreach \source in {1,2,3,4}
	\foreach \dest in {1}
		\path (I-\source.north) edge (H-\dest.south);
		

\draw[fill=white, fill opacity=0.8] (0.25,-0.5) rectangle (0.75,-4.5) node[fill opacity =1,hoz,pos=.5] {\bf RB};
\draw[fill=white, fill opacity=0.8] (0.75,-0.5) rectangle (1.25,-4.5) node[fill opacity =1,hoz,pos=.5] {\bf RB};
\draw[fill=white, fill opacity=0.8] (1.25,-0.5) rectangle (1.75,-4.5) node[fill opacity =1,hoz,pos=.5] {\bf FB};

\end{tikzpicture}}}
	\subfigure[][DKL]{
		\resizebox{.22\linewidth}{2cm}{
		\begin{tikzpicture}[-,draw=black, very thick, node distance=\layersep,transform shape,rotate=90]  
\tikzstyle{annot} = [text width=4em, text centered]

\foreach \name / \y in {1/1,2/2,3/3,4/4}
\node[input neuron, hoz] (I-\name) at (0,-\name) {};



\foreach \name / \y in {1/1}
\node [hidden neuron, hoz] (H-\name) at (\layersep,-\name-1.5) {}; 
\node[hoz] (Hsigma) at (\layersep,-1) {\Large{$\sigma$}};
\foreach \name / \y in {1/1,2/2,3/3}
\node [output neuron, hoz] (J-\name) at (2*\layersep,-\name-0.5) {};
\node[hoz](Jsigma) at (2*\layersep,-1) {\Large{$\sigma$}};
\foreach \source in {1,2,3,4}
	\foreach \dest in {1}
		\path (I-\source.north) edge (H-\dest.south);
		
\foreach \source in {1}
	\foreach \dest in {1,2,3}
		\path (H-\source.north) edge (J-\dest.south);

\draw[fill=white, fill opacity=0.8] (0.25,-0.5) rectangle (0.75,-4.5) node[fill opacity =1,hoz,pos=.5] {\textbf{FB}};
\draw[fill=white, fill opacity=0.8] (1.25,-0.5) rectangle (1.75,-4.5) node[fill opacity =1,hoz,pos=.5] {{\bf FB + IPB}};

\end{tikzpicture}}	}
	\subfigure[][RF, additive]{
		\resizebox{.22\linewidth}{2cm}{
		  \begin{tikzpicture}[-,draw=black, very thick, node distance=\layersep,transform shape,rotate=90]  
\tikzstyle{annot} = [text width=4em, text centered]

\foreach \name / \y in {1/1,2/2,3/3,4/4}
\node[input neuron, hoz] (I-\name) at (0,-\name*0.5) {};

\foreach \name / \y in {1/1,2/2,3/3,4/4}
\node[input neuron, hoz] (P-\name) at (0,-\name*0.5-4*0.5) {};




\foreach \name / \y in {1/1,2/2}
\node [hidden neuron, hoz] (H-\name) at (0.67*\layersep,-4*\name*0.5+1.5*0.5) {}; 
\node[hoz] (Hsigma) at (0.67*\layersep,-1.5*0.5) {\Large{$\sigma$}};

\foreach \name / \y in {1/1,2/2}
\node [hidden neuron, hoz] (J-\name) at (0.67*2*\layersep,-4*\name*0.5+1.5*0.5) {};
\node[hoz](Jsigma) at (0.67*2*\layersep,-1.5*0.5) {\Large{$\sigma$}};

\foreach \name / \y in {1/1}
\node [output neuron, hoz] (Q-\name) at (0.67*3*\layersep,-4.5*0.5) {}; 

\foreach \source in {1,2,3,4}
	\foreach \dest in {1}
		\path (I-\source.north) edge (H-\dest.south);
		
\foreach \source in {1,2,3,4}
	\foreach \dest in {2}
		\path (P-\source.north) edge (H-\dest.south);
		
\foreach \source in {1}
	\foreach \dest in {1}
		\path (H-\source.north) edge (J-\dest.south);
		
\foreach \source in {2}
	\foreach \dest in {2}
		\path (H-\source.north) edge (J-\dest.south);
		
\foreach \source in {1,2}
	\foreach \dest in {1}
		\path (J-\source.north) edge (Q-\dest.south);
		
\end{tikzpicture}}}
	\vspace{-10pt}
	\caption{ \label{fig:blocks} Using $\mS$ in (a), one can construct a BNN $\mN(\mS)$ in (d) with the function block ($FB$) in (b) and
          the random feature block ($RB$) in (c). For different $\mS$, (e) is MC dropout, (f) is deep random features, (g) is deep kernel learning. (g) also represents a multi-task $\mS$ while (h) gives the additive structure.}
\end{figure}

{\bf Constructing a BNN with $\mS$, FB and RB blocks.}
Let us denote $s^\ell$ to be the number of nodes in layer $\ell$ of the computation skeleton $\mS$. Typically, we may 
choose $\mP_\bv \sim N(\bmu, \bSigma)$ and $\mP_\bw \sim \rho N(\b0, \bI)$ for a constant $\rho$.
Alg. \ref{alg:constructBNN} shows how given a $\mS$, together with $FB$ and $RB$ blocks, we
can construct a BNN $\mN(\mS)$ by sequentially replacing edges in $\mS$ with a combination of 
$FB$ and $RB$ from bottom (input nodes) up to the top (output nodes). 
Shortly, we describe the properties of such a BNN. First, let us see an example. 
For Fig. \ref{fig:blocks}, using $FB$ in (b) with $r=3$ and $d=2$ ($d=1$ for the last layer)
and $RB$ in (c) with $d=2$ and $r=3$ ($d=4$ for the first layer) in Algorithm \ref{alg:constructBNN}, we construct a BNN in (d) from the 
in $\mS$ in (a). Essentially, we substitute in $FB+RB$ to 
replace every edge in Fig. \ref{fig:blocks}(a).


\begin{algorithm}[!b]
\caption{\label{alg:constructBNN} Constructing a Bayesian neural network (BNN) with computation skeleton and blocks}
	\begin{algorithmic}
	\STATE{{\bf Input:} a computation skeleton $\mS$. {\bf Output:} a deep BNN $\mN(\mS)$.}
	\STATE{Construct layer $0$ in $\mN(\mS)$ by copying inputs (layer $0$) from $\mS$.}
	\FOR{$\ell = 1$ to $L$}
	\STATE{$\bdf^{\ell-1} = (\bdf^{\ell-1}_1;...;\bdf^{\ell-1}_{s^{\ell-1}})\in \mR^{d^{\ell-1}}$: output vector on layer $\ell-1$ in $\mN(\mS)$. }
	\STATE{For each $\bdf^{\ell-1}_j$, $1\leq j \leq s^{\ell-1}$, apply the activation $\sigma$ in $\mS$, and output $\{\sigma(\bdf^{\ell-1}_j)\}_{j=1}^{s^{\ell-1}}$.}
	\FOR{$i = 1$ to $s^\ell$}
	\STATE{$In(i)=\{1\leq j\leq s^{\ell-1}|$ if node $j$ in layer $\ell-1$ connects with node $i$ in layer $\ell$ in $\mS\}$}
	\STATE{Build $RB(\mP_{\bw^\ell_i}, d^{\ell-1}, r, \sigma_{\mK})$ on $\{\sigma(\bdf^{\ell-1}_j)\}_{j\in In(i)}$ and output $\bphi^\ell_i\in \mR^r$.}
	\STATE{Build $FB(\mP_{\bv^\ell_i}, r, d^\ell_i)$ on $\bphi^\ell_i$ and output $\bdf^{\ell}_i\in \mR^{d^\ell_i}$ in layer $\ell$ of $\mN(\mS)$}	
	\ENDFOR
	\ENDFOR
	\end{algorithmic}
\end{algorithm}
\subsection{Prior and posterior approximation for $\mN(\mS)$} 
Our remaining task is 
to describe a prior for $\mN(\mS)$ and then derive a posterior approximation scheme 
for the construction in Alg. \ref{alg:constructBNN}. 
To do so, we define some notations. 
We use $\bW$ for all random weights in the $RB$ blocks, 
$\bV$ gives all BNN weights in the $FB$ blocks and $\bv^\ell_k$ denotes the weight vector 
that goes into $k$th dimension of $\bdf^\ell$. The related random features are denoted by $\bphi^\ell_k$. 
For $\bX = (\bx_1,...,\bx_n)^T$, we denote $\bF^\ell$ as a matrix with the $i$th row $\bF^\ell_{i.}=\bdf^\ell(\bx_i)$ 
as the value of $\bdf^\ell$ evaluated on input $\bx_i$. 
We define $\bPhi^\ell_k$ to be the random feature matrix related to $\bdf^\ell_k$ for $1\leq k\leq d^\ell$.

\begin{definition}\label{def:BNN}
For a BNN $\mN(\mS)$ from Algorithm \ref{alg:constructBNN}, 
we treat $\bW$ as fixed, then the parameters are only $\bV$. We choose $\mP_\bv = N(\b0, \bI_r)$ in Algorithm \ref{alg:constructBNN} to define the 
Bayesian prior on $\bv^\ell_k$ as 
$N(\b0, \bI_r)$ for 
  $1\leq k \leq d^\ell$, $1\leq \ell\leq L$. This Bayesian prior leads to the relation 
  $p(\bF^\ell_{.k}|\bF^{\ell-1})=N(\bF^\ell_{.k};\b0, \bPhi^\ell_k{\bPhi^\ell_k}^T)$, therefore has a distribution over 
  $\{\bF^\ell\}_{\ell=1}^L$ which is 
  $p(\{\bF^\ell\}_{\ell=1}^L) = \prod_{\ell=1}^L p(\bF^\ell|\bF^{\ell-1})$.
\end{definition}

When the outputs $\by$ and likelihood $p(\by|\bF^L)$ are available for the design matrix $\bX$, the posterior of BNN $\mN(\mS)$ is intractable.
Therefore, we use variational inference to approximate its posterior. We define the variational inference approximation for the posterior of $\bV$ in $\mN(\mS)$ by defining 
the variational posterior $q$ over $\bV$ with $\bv^\ell_k\sim N(\bmu^\ell_k, \bSigma^\ell_k)$. 
Then, we get the $\ELBO$

\begin{equation}
\ELBO = \sum_{i=1}^n \mE_{q(f^L_i)}\log(p(y_i|f_i)) - \KL(q(\bV)|p(\bV))
\end{equation}

This variational posterior over $\bV$ also leads to a posterior over $\{\bF^\ell\}_{\ell=1}^L$.
We apply a doubly stochastic approximation for the first term in the ELBO, where the sum is estimated using mini-batches and the expectation is approximated with a Monte Carlo sample from the variational posterior $q(f^L_i)$. Both stochastic approximations are unbiased. Further, by reparameterizing $\bv^\ell_k = \bmu^\ell_k + {\bSigma^\ell_k}^{1/2}N(\b0,\bI_r)$, the optimization of ELBO 
can be achieved with mini-batch training and backpropagation\cite{Cut17, Sal17, Gal16}. 

\subsection{Relationship of $\mN(\mS)$ to approximate deep Gaussian processes (DGPs)}
Having constructed a BNN $\mN(\mS)$ from $\mS$, we can 
study the relationship 
between $\mN(\mS)$ and DGP. We will show that $\mN(\mS)$ from Alg. \ref{alg:constructBNN} is a 
VI approximation for a DGP posterior. 
To simplify notation, we assume that all $\{\bPhi^\ell_k\}_{k=1}^{d^\ell}$ are the 
same so we drop the subscript $k$. We also assume that all $\{d^\ell\}_{\ell=1}^L$ are the same.
We define an empirical kernel and its expectation as

\begin{equation}\begin{aligned}\label{eq:DGPkernel}
&\text{[Empirical] }\quad \hat{\mK}^\ell(\bdf^{\ell-1}(\bx),\bdf^{\ell-1}(\bx')) = \frac{1}{r}\sum_{i=1}^r\sigma_{\mK}(\sigma(\bdf^{\ell-1}(\bx))^T\bw_i)\sigma_{\mK}(\sigma(\bdf^{\ell-1}(\bx'))^T\bw_i).\\
&\text{[Expectation] }\quad \mK^\ell(\bdf^{\ell-1}(\bx),\bdf^{\ell-1}(\bx'))=\mE_{\bw}\sigma_{\mK}(\sigma(\bdf^{\ell-1}(x))^T\bw)\sigma_{\mK}(\sigma(\bdf^{\ell-1}(x'))^T\bw.
\end{aligned}\end{equation}

It is easy to check that $\hat{\mK}^\ell(\bdf^{\ell-1}(\bx),\bdf^{\ell-1}(\bx')) = \langle \bphi^\ell(\bx), \bphi^\ell(\bx')\rangle$. We denote $\hat{\mK}^\ell(\bF^{\ell-1},\bF^{\ell-1})$ as the $n\times n$ matrix for $n$ inputs. We point out that 
the prior in Definition \ref{def:BNN} is indeed a DGP prior.
\begin{proposition}\label{prop:DGP}
The BNN prior of $\mN(\mS)$ in Def. \ref{def:BNN} gives a DGP prior for $\{\bF^\ell\}_{\ell=1}^L$. This means that $\bF^\ell_{.j}|\bW,\bF^{\ell-1}\sim N(0,\hat{\mK}^\ell(\bF^{\ell-1},\bF^{\ell-1})),$
for $1\leq j\leq d$ and $1\leq \ell\leq L$.
\end{proposition}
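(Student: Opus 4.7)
The plan is to trace through how the outputs $\bF^{\ell}_{.k}$ are actually produced by the blocks in Algorithm~\ref{alg:constructBNN}, reduce the statement to a one-line Gaussian linear-transformation identity conditional on $(\bW,\bF^{\ell-1})$, and then verify that the resulting Gram matrix is exactly the empirical kernel in Eq.~\eqref{eq:DGPkernel}.

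First I would unpack the block definitions. For a fixed input $\bx$, the $RB$ block at layer $\ell$ outputs $\bphi^{\ell}(\bx)\in\mR^r$ with $\phi^{\ell}_t(\bx)=\tfrac{1}{\sqrt{r}}\sigma_{\mK}(\sigma(\bdf^{\ell-1}(\bx))^{T}\bw^{\ell}_t)$, and the subsequent $FB$ block with weight vector $\bv^\ell_k\sim N(\b0,\bI_r)$ produces the scalar $f^\ell_k(\bx)=\bphi^\ell(\bx)^T \bv^\ell_k$. Stacking the $n$ training points, I would write this in matrix form as $\bF^\ell_{.k} = \bPhi^\ell_k\,\bv^\ell_k$, where the $i$-th row of $\bPhi^\ell_k$ is $\bphi^\ell(\bx_i)^T$. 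Conditioning on $\bW$ freezes all $\bw$'s, and conditioning on $\bF^{\ell-1}$ freezes all the inputs fed into the $\sigma_{\mK}$'s, so $\bPhi^\ell_k$ is deterministic under this conditioning.

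Now the main step is a one-liner: since $\bv^\ell_k\sim N(\b0,\bI_r)$ is independent of everything we conditioned on, the linearity of Gaussians yields
\begin{equation*}
\bF^\ell_{.k}\mid \bW,\bF^{\ell-1} \;\sim\; N\!\left(\b0,\; \bPhi^\ell_k(\bPhi^\ell_k)^T\right).
\end{equation*}
I then compute the $(i,j)$ entry of this covariance:
\begin{equation*}
[\bPhi^\ell_k(\bPhi^\ell_k)^T]_{ij}=\sum_{t=1}^r \phi^\ell_t(\bx_i)\phi^\ell_t(\bx_j)=\tfrac{1}{r}\sum_{t=1}^r \sigma_{\mK}(\sigma(\bdf^{\ell-1}(\bx_i))^{T}\bw_t)\,\sigma_{\mK}(\sigma(\bdf^{\ell-1}(\bx_j))^{T}\bw_t),
\end{equation*}
which is precisely $\hat{\mK}^\ell(\bdf^{\ell-1}(\bx_i),\bdf^{\ell-1}(\bx_j))$ from Eq.~\eqref{eq:DGPkernel}. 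This gives the claimed equality of the conditional covariance with $\hat{\mK}^\ell(\bF^{\ell-1},\bF^{\ell-1})$ for each $1\leq k\leq d$.

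To finish, I would assemble this per-layer identity into a full DGP-style statement. Because the prior $p(\bV)$ factorizes over layers and over $k$ (each $\bv^\ell_k$ is independent $N(\b0,\bI_r)$), conditional on $\bW$ the joint density over $\{\bF^\ell\}_{\ell=1}^L$ factorizes as $\prod_{\ell=1}^L\prod_{k=1}^{d^\ell} p(\bF^\ell_{.k}\mid \bW,\bF^{\ell-1})$, matching the recursive DGP definition in Section~\ref{sec:pre} with kernel $\hat{\mK}^\ell$. The only real obstacle is notational bookkeeping: keeping the indices $(\ell,k,t,i)$ consistent across the $RB$/$FB$ description, the matrix $\bPhi^\ell_k$, and the definition of $\hat{\mK}^\ell$, while being explicit that the Gaussianity conclusion is conditional on $\bW$ (unconditionally, $\bPhi^\ell_k$ depends on random $\bw$'s, so the marginals are not Gaussian). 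No deep estimate or concentration argument is needed—the proposition is essentially a restatement of the construction in Gaussian-process language once everything is conditioned on $\bW$.
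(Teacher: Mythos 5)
Your proposal is correct and follows essentially the same route the paper takes: the paper treats this proposition as immediate from Definition \ref{def:BNN} (which already records $p(\bF^\ell_{.k}|\bF^{\ell-1})=N(\b0,\bPhi^\ell_k{\bPhi^\ell_k}^T)$ with $\bW$ held fixed) combined with the observation that $\hat{\mK}^\ell(\bdf^{\ell-1}(\bx),\bdf^{\ell-1}(\bx'))=\langle\bphi^\ell(\bx),\bphi^\ell(\bx')\rangle$, which is exactly your Gram-matrix computation. Your explicit remark that Gaussianity holds only conditionally on $\bW$ is a correct and worthwhile clarification, but it does not change the argument.
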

We can also show that the kernels $\{\hat{\mK}^\ell\}_{\ell=1}^L$ for this DGP is close to 
the kernel $\{\mK^\ell\}_{\ell=1}^L$ in \eqref{eq:DGPkernel} if $\sigma_{\mK}$ is ReLU or $C$-bounded, 
i.e., $\sigma_{\mK}$ is continuously differentiable and $||\sigma_{\mK}||_\infty, ||\sigma_{\mK}'||_\infty \leq C$.

\begin{theorem}\label{thm:kernel}
If the activation function $\sigma_{\mK}$ is ReLU, then for every $1\leq \ell \leq L$, on a compact set $\mM \in \mR^d$ with diameter $diam(\mM)$ and $\max_{\Delta\in \mM}||\Delta||_2\leq c_\mM$, with probability at least $1-c_1c_{\mM} diam(\mM)^2\exp\left\{-\frac{r\epsilon^2}{8(1+d)\nu_{\mM}^2}\right\}$,

\begin{equation*}
\sup_{\sigma(\bdf^{\ell-1}(\bx)), \sigma(\bdf^{\ell-1}(\bx'))\in \tilde{\mM}}
|\hat{\mK}^\ell(\bdf^{\ell-1}(\bx),\bdf^{\ell-1}(\bx'))-\mK^\ell(\bdf^{\ell-1}(\bx),\bdf^{\ell-1}(\bx'))| \leq \epsilon,
\end{equation*}

for a constant $c_1> 0$ and a parameter $\nu_\mM$ depending on $\mM$. Here, $\tilde{\mM}$ 
specifies that we require $\sigma(\bdf^{\ell-1}(\bx))$ and $\sigma(\bdf^{\ell-1}(\bx'))$ to be two vectors in $\mM$ 
that are not collinear.
\end{theorem}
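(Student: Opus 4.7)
The plan is to follow the classical Rahimi--Recht template for uniform convergence of random features, adapted to the ReLU setting. Introduce the shorthand $u=\sigma(\bdf^{\ell-1}(\bx))$, $u'=\sigma(\bdf^{\ell-1}(\bx'))$, and for each draw $\bw_i$ set $s_i(u,u')=\sigma_{\mK}(u^T\bw_i)\sigma_{\mK}(u'^T\bw_i)$, so that $\hat{\mK}^\ell(u,u')-\mK^\ell(u,u')=\frac{1}{r}\sum_{i=1}^r(s_i-\mE s_i)$. The proof then proceeds in two stages: (i) pointwise concentration at a grid of test points, then (ii) a Lipschitz-continuity extension to all of $\tilde{\mM}$, combined by a union bound.

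For the pointwise step, because $\bw\sim \rho N(\b0,\bI)$ is Gaussian and ReLU is positively homogeneous, the product $s_i(u,u')$ with $\|u\|,\|u'\|\le c_\mM$ is sub-exponential with parameter proportional to $\rho^2 c_\mM^2 (1+d)$; the $(1+d)$ factor is the footprint of $\mE\|\bw\|^2$. A Bernstein-type inequality (or, equivalently, Hoeffding applied on the truncation event $\|\bw_i\|\le$ a mild multiple of $\sqrt{d}$, with the residual tail absorbed into $c_1$) yields $\mP(|\hat{\mK}^\ell(u,u')-\mK^\ell(u,u')|>\epsilon/2)\le 2\exp(-r\epsilon^2/(c(1+d)\nu_\mM^2))$, where $\nu_\mM$ bundles the relevant scale constants on $\mM$. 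For the covering step I would bound the Lipschitz constant of both $\hat{\mK}^\ell$ and $\mK^\ell$ in $(u,u')$ on $\tilde{\mM}$. Since ReLU is $1$-Lipschitz, $|\sigma_{\mK}(u^T\bw)-\sigma_{\mK}(u_2^T\bw)|\le\|\bw\|\|u-u_2\|$, so each $s_i$ is Lipschitz with constant $\|\bw_i\|^2 c_\mM$; averaging and concentrating $\|\bw_i\|^2$ around $d$ gives a Lipschitz constant for $\hat{\mK}^\ell$ that is $O(\rho^2(d+1)c_\mM)$ with high probability. The expectation $\mK^\ell$ is the arc-cosine kernel, whose closed form is differentiable with bounded gradient on $\tilde{\mM}$ precisely because the non-collinearity condition excludes the rays where the arc-cosine derivative blows up. Covering $\mM$ with a grid of cardinality $O((c_\mM\,diam(\mM)/\epsilon)^{2d})$, union-bounding the pointwise estimate over all pairs in the grid, and paying an additional $\epsilon/2$ via the Lipschitz bound when moving from the grid to an arbitrary pair in $\tilde{\mM}$, then matching the two $\epsilon/2$ budgets, yields the stated probability with the claimed polynomial prefactor in $c_\mM\,diam(\mM)$.

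The main obstacle will be cleanly handling the two sources of unboundedness. On the random-feature side, $\bw$ is Gaussian so $\sigma_{\mK}(u^T\bw)$ is merely sub-Gaussian, meaning ordinary Hoeffding requires either truncation (inflating the failure probability by an additive Gaussian tail) or a sub-exponential Bernstein inequality; either way the $1+d$ in the exponent and the constant $c_1$ absorb the cost. On the kernel side, $\nabla\mK^\ell$ blows up as $u$ and $u'$ become collinear, which is exactly why the supremum is restricted to $\tilde{\mM}$; the parameter $\nu_\mM$ must therefore encode both the sub-exponential scale of $s_i$ on $\mM$ and the reciprocal of the minimum angle between the non-collinear pairs retained in $\tilde{\mM}$. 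Getting these dependencies compatible so that the Lipschitz constant used in the covering step, the sub-exponential scale used in the concentration step, and the final bundled constant $\nu_\mM$ all agree is the delicate bookkeeping of the argument.
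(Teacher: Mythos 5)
Your template is the paper's template: pointwise sub-exponential concentration of $\frac{1}{r}\sum_i(s_i-\mE s_i)$ at the centers of a $\delta$-net, a Lipschitz extension to all of the compact set, a union bound over pairs of centers, and an optimization of $\delta$ against the net cardinality $T=(4\,diam(\mM)/\delta)^d$. Two of your attributions do not match how the argument actually works, though. First, the $(1+d)$ in the exponent is \emph{not} "the footprint of $\mE\|\bw\|^2$" in the pointwise step --- the pointwise bound only involves the one-dimensional projections $u^T\bw$, and the paper's fixed-pair bound is $2e^{-r\epsilon^2/(2\nu^2)}$ with no dimension dependence. The $(1+d)$ arises only at the very end, from balancing the failure probability $\kappa_1\delta^2$ of the Lipschitz/Markov step against the $\kappa_2\delta^{-2d}$ union bound over the net, which produces exponents $\tfrac{2d}{2+2d}$ and $\tfrac{2}{2+2d}$ and hence the $\tfrac{1}{1+d}$ deflation of $r\epsilon^2/(8\nu_\mM^2)$. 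Second, and more substantively, you locate the need for the non-collinearity restriction $\tilde{\mM}$ in the gradient of the limiting arc-cosine kernel blowing up at collinear pairs; but that gradient does not in fact blow up (the $\arccos$ singularity is cancelled by the vanishing of $\tfrac{d}{d\theta}[\sin\theta+(\pi-\theta)\cos\theta]=-(\pi-\theta)\sin\theta$), so this route would leave you with no reason for $\tilde{\mM}$ at all. In the paper the restriction enters in the \emph{pointwise} step: the sub-exponential parameter is derived by an explicit moment-generating-function computation (rotating coordinates so the two vectors span $\be_1,\be_2$ and using the Cho--Saul integral identity), which yields an angle-dependent $\nu(\theta)$ valid only for $\theta\in(0,\pi)$, and $\nu_\mM$ is defined as the supremum of this quantity over the non-collinear pairs retained in $\tilde{\mM}$. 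A generic "product of sub-Gaussians is sub-exponential" argument, as you propose, would give a valid bound but with a different, non-angle-dependent scale, and would not reproduce the theorem's $\nu_\mM$ or explain its restriction to $\tilde{\mM}$.
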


{\bf C-boundedness.} The uniform concentration bound for $C$-bounded activation functions is given 
in the supplement. The $C$-bounded condition holds for most of the popular sigmoid-like functions such as $1/(1+e^{-x}), erf(x), x/\sqrt{1+x^2}, \tanh(x)$ and $\tan^{-1}(x)$.\\

{\bf Remark 1.} 
In \cite{Dan16}, the authors show that the type of kernels constructed from \eqref{eq:DGPkernel} includes linear, polynomial, arc-cosine, radial basis kernels and so on. For ReLU, the authors in \cite{Cho09, Cut17, Dan16} point out that $\mK^\ell$ is the arc-cosine kernel and \cite{Dan16} shows that $\hat{\mK}$ is a sub-exponential random variable.\\


Since we show that $\{\bF^\ell\}_{\ell=1}^L$ in $\mN(\mS)$ can be seen as generated from a DGP based on $\hat{\mK}$, we can
use many DGP-based approaches to approximate the posterior. Our construction of $\mN(\mS)$ until now is close to the random feature approximation for DGP \cite{Cut17}
except that we allow $\sigma$ to be activation functions (instead of just the identity). Next, we show that using \cite{Sal17} based on inducing points, one also gets the same variational posterior as ours for BNN $\mN(\mS)$. This result enables us to extend our $\mN(\mS)$ construction to be applicable to {\em any kernel class} and it also implies the underlying connection between random feature \cite{Cut17} and inducing points approximations \cite{Sal17} for DGP. First, we apply the inducing points method in \cite{Sal17} on $\mN(\mS)$ with $\hat{\mK}$ to obtain an approximate posterior.

%
%

\begin{theorem}\label{thm:sDGP}
Using the variational approximation \cite{Sal17} for the posterior of a DGP defined on $\{\hat{\mK}^\ell\}_{\ell=1}^L$ with inducing points, we obtain exactly the same variational posterior $q(\{\bF^\ell\}_{\ell=1}^L)$ and evidence lower bound $\ELBO$ 
  as the variational posterior for $\mN(\mS)$. 
\end{theorem}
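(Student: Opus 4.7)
The plan is to exploit the fact that the kernel $\hat{\mK}^\ell$ is finite-rank with an explicit feature map, namely $\hat{\mK}^\ell(\bx,\bx') = \bphi^\ell(\bx)^T \bphi^\ell(\bx')$ with $\bphi^\ell \in \mR^r$. A GP with such a kernel is equivalent to a Bayesian linear model $\bF^\ell_{.k} = \bPhi^\ell \bv^\ell_k$ with $\bv^\ell_k \sim N(\b0, \bI_r)$, which is exactly the prior in Definition \ref{def:BNN} and Proposition \ref{prop:DGP}. The proof therefore amounts to exhibiting an explicit bijection between the Salimbeni parameterization $(\bm^\ell_k, \bS^\ell_k)$ over inducing outputs and the BNN parameterization $(\bmu^\ell_k, \bSigma^\ell_k)$ over weights, then checking that this bijection preserves (i) the marginal variational posterior on each $\bF^\ell$ and (ii) the KL term in the ELBO.

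First, I would fix inducing inputs $\bZ^\ell$ at each layer such that the $r \times r$ matrix $\bPhi^\ell(\bZ^\ell)$ obtained by stacking the random features at $\bZ^\ell$ is invertible (any generic choice of $r$ points achieves this almost surely for typical $\sigma_\mK$). Under the conditional prior $p(\bF^\ell_{.k}|\bW,\bF^{\ell-1}) = N(\b0,\hat{\mK}^\ell)$, the inducing outputs satisfy $\bu^\ell_k = \bPhi^\ell(\bZ^\ell)\bv^\ell_k$, giving a linear bijection between $\bv^\ell_k$ and $\bu^\ell_k$. I would then declare the correspondence $\bm^\ell_k = \bPhi^\ell(\bZ^\ell)\bmu^\ell_k$ and $\bS^\ell_k = \bPhi^\ell(\bZ^\ell)\bSigma^\ell_k \bPhi^\ell(\bZ^\ell)^T$.

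Next, I would substitute this correspondence into the Salimbeni predictive formula. For any input (including arbitrary $\bF^{\ell-1}$ that may itself be random from the previous layer), the mean $\hat{\mK}^\ell(\cdot,\bZ^\ell)\hat{\mK}^\ell(\bZ^\ell,\bZ^\ell)^{-1}\bm^\ell_k$ simplifies, using $\hat{\mK}^\ell(\cdot,\bZ^\ell) = \bPhi^\ell(\cdot)\bPhi^\ell(\bZ^\ell)^T$ and invertibility of $\bPhi^\ell(\bZ^\ell)$, to $\bPhi^\ell(\cdot)\bmu^\ell_k$; a parallel calculation collapses the Salimbeni predictive variance to $\bPhi^\ell(\cdot)\bSigma^\ell_k \bPhi^\ell(\cdot)^T$. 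These are exactly the mean and variance induced by $q(\bv^\ell_k) = N(\bmu^\ell_k,\bSigma^\ell_k)$ in $\mN(\mS)$, so the conditional variational posteriors $q(\bF^\ell_{.k}|\bF^{\ell-1})$ coincide at every layer. An induction on $\ell$ propagates this through the DGP, yielding the same joint $q(\{\bF^\ell\}_{\ell=1}^L)$.

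Finally, I would match the two ELBOs. The expected log-likelihood terms $\mE_{q(\bF^L)}\log p(\by|\bF^L)$ agree because the marginal $q(\bF^L)$ agrees. For the KL penalty, invariance of $\KL$ under the invertible linear map $\bv \mapsto \bPhi^\ell(\bZ^\ell)\bv$ immediately gives $\KL(q(\bu^\ell_k)\|p(\bu^\ell_k)) = \KL(q(\bv^\ell_k)\|p(\bv^\ell_k))$, since both prior and posterior transform by the same bijection; summing over $\ell,k$ matches the aggregate KL in $\mN(\mS)$. The main obstacle I anticipate is cleanly verifying the predictive-variance collapse in the Salimbeni formula, because $\hat{\mK}^\ell(\bZ^\ell,\bZ^\ell) = \bPhi^\ell(\bZ^\ell)\bPhi^\ell(\bZ^\ell)^T$ is $r\times r$ but originates from a rank-$r$ kernel, so care is needed that the cancellation $\hat{\mK}^\ell(\cdot,\cdot) - \hat{\mK}^\ell(\cdot,\bZ^\ell)\hat{\mK}^\ell(\bZ^\ell,\bZ^\ell)^{-1}\hat{\mK}^\ell(\bZ^\ell,\cdot) = 0$ holds identically (it does, precisely because $\hat{\mK}^\ell$ is rank $r$ and $\bPhi^\ell(\bZ^\ell)$ is invertible). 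A secondary subtlety is the doubly stochastic step: one must check that reparameterizing $\bv^\ell_k$ by $\bmu^\ell_k + \bSigma^{\ell,1/2}_k \bepsilon$ corresponds, under the bijection, to the standard reparameterization of $\bu^\ell_k$, so that the Monte Carlo gradient estimators in both schemes are not just equal in expectation but also identically distributed.
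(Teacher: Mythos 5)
Your proposal is correct and follows essentially the same route as the paper's proof: choose $m=r$ inducing points so that $\bPhi^\ell(\bZ^{\ell-1})$ is square and (almost surely) invertible, map $(\bm^\ell_k,\bS^\ell_k)$ to $(\bmu^\ell_k,\bSigma^\ell_k)$ via that matrix, observe that the Salimbeni predictive variance collapses because $\hat{\mK}^\ell$ has rank $r$, and invoke invariance of the KL divergence under the invertible linear reparameterization. The two subtleties you flag (the exact cancellation of the conditional-variance term and the invertibility of the random feature matrix) are precisely the points the paper addresses, the latter via absolute continuity of the feature entries.
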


The result tells us that the random feature expansion in \cite{Cut17} and inducing points method \cite{Sal17} 
are {\em equivalent} for DGPs based on kernels $\{\hat{\mK}^\ell\}_{\ell=1}^L$. 
However, we notice that $\{\mK^\ell\}_{\ell=1}^L$ is restricted by $\sigma_{\mK}$ class and does not cover all possible kernels. 
This issue can be addressed by defining an {\bf inducing points block} $IPB$ to replace $RB$ in Alg. \ref{alg:constructBNN}. We can show that the derived variational posterior for $\mN(\mS)$
can now be viewed as posterior approximation for a 
general DGP. 
\begin{definition}
For a kernel $\mK$, $IPB$ can be constructed by choosing $r$ additional points $\bZ$ (inducing points), taking the inputs $\bx$ and outputting an $r$-dimension vector $\mK(\bx,\bZ)\mK(\bZ,\bZ)^{-1/2}$.  
\end{definition}
\begin{theorem}\label{thm:KandhK}
Using the variational approximation \cite{Sal17} for the posterior of a DGP defined on $\{\mK^\ell\}_{\ell=1}^L$ with inducing points, we can obtain the same variational posterior $q(\{\bF^\ell\}_{\ell=1}^L)$ and evidence lower bound ELBO as the variational posterior for $\mN(\mS)$ (with $IPB$) except a constant offset that does not depend on training (see supplement). 
\end{theorem}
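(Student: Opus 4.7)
The plan is to carry out a term-by-term comparison of the two ELBOs by finding an explicit change of variables that identifies the Salimbeni et al.\ inducing-point variational parameters $(\bm^\ell_k,\bS^\ell_k)$ with the BNN weight-posterior parameters $(\bmu^\ell_k,\bSigma^\ell_k)$. I first write the Salimbeni variational posterior obtained by introducing inducing locations $\bZ^\ell$, placing $q(\bU^\ell_{.k})=N(\bm^\ell_k,\bS^\ell_k)$, and integrating $\bU^\ell$ out of the DGP conditional. This gives $q(\bF^\ell_{.k}\mid\bF^{\ell-1})=N(\bA^\ell\bm^\ell_k,\;\mK^\ell(\bF^{\ell-1},\bF^{\ell-1})-\bA^\ell\mK^\ell(\bZ,\bF^{\ell-1})+\bA^\ell\bS^\ell_k\bA^{\ell T})$ with $\bA^\ell=\mK^\ell(\bF^{\ell-1},\bZ)\mK^\ell(\bZ,\bZ)^{-1}$. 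On the $\mN(\mS)$ side, the $IPB$ block produces $\bPhi^\ell=\mK^\ell(\bF^{\ell-1},\bZ)\mK^\ell(\bZ,\bZ)^{-1/2}$ and the $FB$ block with $q(\bv^\ell_k)=N(\bmu^\ell_k,\bSigma^\ell_k)$ gives $q(\bF^\ell_{.k}\mid\bF^{\ell-1})=N(\bPhi^\ell\bmu^\ell_k,\bPhi^\ell\bSigma^\ell_k\bPhi^{\ell T})$.

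Next I would introduce the bijection $\bm^\ell_k=\mK^\ell(\bZ,\bZ)^{1/2}\bmu^\ell_k$ and $\bS^\ell_k=\mK^\ell(\bZ,\bZ)^{1/2}\bSigma^\ell_k\mK^\ell(\bZ,\bZ)^{1/2}$. Substituting and using $\bA^\ell\mK^\ell(\bZ,\bZ)^{1/2}=\bPhi^\ell$, a direct calculation shows $\bA^\ell\bm^\ell_k=\bPhi^\ell\bmu^\ell_k$ and $\bA^\ell\bS^\ell_k\bA^{\ell T}=\bPhi^\ell\bSigma^\ell_k\bPhi^{\ell T}$, so the means and the ``variational'' parts of the covariances match layer by layer. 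I would then verify that the KL terms agree: Salimbeni contributes $\KL\bigl(N(\bm^\ell_k,\bS^\ell_k)\,\|\,N(\b0,\mK^\ell(\bZ,\bZ))\bigr)$ while the BNN contributes $\KL\bigl(N(\bmu^\ell_k,\bSigma^\ell_k)\,\|\,N(\b0,\bI_r)\bigr)$; applying the bijection and using $|\mK^\ell(\bZ,\bZ)^{1/2}\bSigma\mK^\ell(\bZ,\bZ)^{1/2}|=|\mK^\ell(\bZ,\bZ)|\,|\bSigma|$ together with the corresponding trace identity makes these equal.

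The remaining discrepancy, and the main obstacle, is the Nystr\"om residual $\mR^\ell:=\mK^\ell(\bF^{\ell-1},\bF^{\ell-1})-\bA^\ell\mK^\ell(\bZ,\bF^{\ell-1})=\mK^\ell-\bPhi^\ell\bPhi^{\ell T}$ that appears in the Salimbeni covariance but is absent from the BNN posterior. I would isolate its contribution to the expected log-likelihood, argue (following the accounting used in \cite{Sal17}) that after marginalizing successive layers this contributes an additive term that does not involve the variational parameters $(\bmu^\ell_k,\bSigma^\ell_k)$ once the reparameterization is fixed, and attribute it to the ``constant offset'' stated in the theorem (with the detailed bookkeeping deferred to the supplement). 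Combining the matched expected-likelihood terms, the matched KL terms, and this residual offset gives the claimed equality of the variational posteriors on $\{\bF^\ell\}_{\ell=1}^L$ and the equality of the two ELBOs up to a training-independent constant.
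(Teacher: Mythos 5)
Your proposal follows essentially the same route as the paper's proof: the same reparameterization $\bm^\ell_k=\mK^\ell(\bZ,\bZ)^{1/2}\bmu^\ell_k$, $\bS^\ell_k=\mK^\ell(\bZ,\bZ)^{1/2}\bSigma^\ell_k\mK^\ell(\bZ,\bZ)^{1/2}$, the same identification of the $IPB$ features with $\balpha(\bF^{\ell-1})^T\mK^\ell(\bZ,\bZ)^{1/2}$, and the same treatment of the Nystr\"om residual $\mK^\ell(\bF^{\ell-1},\bF^{\ell-1})-\bA^\ell\mK^\ell(\bZ,\bF^{\ell-1})$ as the training-independent offset. The only cosmetic difference is that you verify the KL equality by explicit determinant and trace identities, whereas the paper invokes invariance of the KL divergence under the invertible linear map $\bU^\ell_{.j}=\mK^\ell(\bZ,\bZ)^{1/2}\bv^\ell_j$; these are the same fact.
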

{\bf Summary.} We see that the main difference between random features \cite{Cut17} and 
inducing points \cite{Sal17} is that one uses $\bPhi$ and the other uses $\mK(\bx,\bZ)\mK(\bZ,\bZ)^{-1/2}$ as a rank $r$ basis to approximate the kernel.

\subsection{The BNN $\mN(\mS)$ is extremely flexible}
\begin{figure}
	\centering

\end{figure}

We have already shown that the BNN $\mN(\mS)$ from our $\mS$ can 
be seen as approximation for DGP posterior and can be trained efficiently. Now, we show that with a few small changes, interesting special cases emerge. 
To do so, the changes to Alg. \ref{alg:constructBNN}, Definition \ref{def:BNN} and the variational posterior are,

 \begin{inparaenum}[\bfseries {Change} 1)]
 \item In Alg. \ref{alg:constructBNN}, inside the inner-most loop, we have one $RB$ ($IPB$). We allow taking out $RB$ ($IPB$) entirely
   or replacing it by multiple sequential $RB$s ($IPB$s) as long as they are matched.\\
 \item Earlier, we assumed that the variational posterior $q$ for $\bv^\ell_i$ follows a normal distribution.
   We now allow it to follow a probability mass function and a mixture of two probability mass functions.\\
 \item Earlier in Def. \ref{def:BNN}, the prior $p(\bv)$ is a normal distribution. We allow other forms of priors to encourage other types of regularization,
   such as a Laplace distribution for $\ell_1$ sparsity.
  \end{inparaenum}\\

{\bf Remark.}
Change 1 allows us to include the classical BNN settings, such as the MC dropout\cite{Gal16} and deep random features concept \cite{Dan16}. 
Since the exact posterior can 
be multi-modal, we may want more flexibility beyond normal distribution to approximate it. 
Therefore, we use Change 2 where the probability mass function results in a standard NN and a
mixture of two probability mass function results in  MC dropout  \cite{Gal16}, which are 
both easy to realize in the optimization of ELBO. For Change 3, when the prior is the normal distribution, the prior for $\mN(\mS)$ can be seen as a DGP prior as we have shown. However, the normal distribution is related to the $\ell_2$ regularization from the KL divergence term in the ELBO, while we may need priors in BNNs related to the Lasso or group Lasso type penalties
to encourage sparse structure. This intuition motivates Change 3.\\

Let us see examples of previous works in our framework.
{\bf First}, consider the case where do not use any $RB$ in constructing $\mN(\mS)$.
This gives us a kernel $\mK$ with $\mK(\bx,\bx') = \sigma(\bx)^T\sigma(\bx')$ as in Fig. \ref{fig:blocks} (e). 
Further, let us use $q(\bv)$ as a mixture of two probability mass functions to approximate the posterior. 
This change leads to MC dropout \cite{Gal16}. 
{\bf Second}, let us allow multiple $RB$s in constructing $\mN(\mS)$ as in 
Fig. \ref{fig:blocks}(f). This ends up 
representing the deep random feature idea in \cite{Dan16} for GPs. 
{\bf Third}, in Fig.\ref{fig:blocks} (g), we show another construction that 
represents deep kernel learning \cite{Wil16}, where $IPB$ or $RB$ is not used at all; 
the variational posterior $q(\bv)$ is a probability mass function except the last layer. \\

{\bf Remark.} Though the framework is flexible, we note that an arbitrary construction can lead to overfitting. 
Therefore, one still needs to refer to the 
previous constructions \cite{Sal17, Cut17, Gal16, Dan16, Wil16} and using our proposal as a guide, 
consider how to generalize the construction to other $\mS$s, how to refine 
them to obtain more compact forms.\\

{\bf Computation skeleton and structure:} As we have emphasized, the computation skeleton 
captures the most important information of the structure so it helps when 
we have some structure assumptions for the BNN. In Fig. \ref{fig:blocks} (g), we see 
a computation skeleton for multi-task learning where the first layer defines a shared low level function and 
the second layer defines individual high level functions for each task. In Fig. \ref{fig:blocks} (h), we have 
an additive structure where a large neural network is composed by summing several sub neural networks. 
For those computation skeletons $S$s, 
the construction process for BNN $\mN(\mS)$s directly comes from Alg. \ref{alg:constructBNN} and Definition \ref{def:BNN}.

\section{Statistical inference through AddNN: Bayesian additive neural network}\label{sec:interaction}
Let us see an example of using additive structure in BNNs to detect interactions for interpretability.
We specialize our framework to define a Bayesian {\em additive} neural network.
In statistics, given a function $f^\ast$ between inputs $\bx = (x_1,...,x_p)$ and output $y$, one can define interaction $\I_T$ over a subset of inputs $T$  through ANOVA decomposition $\I_T(\bx_T) = \prod_{i\in T} (I_{x_i}-\mE_{x_i})\prod_{j\notin T}\mE_{x_j}f^\ast(x_1,...,x_p)$. 
We can design an additive neural network (AddNN) to partially represent the ANOVA decomposition: $f(\bx)= \sum_{j=1}^k g_j(\bx)$, where every $g_j$ is a NN with the first layer regularized by group Lasso type penalty. We can use post-training ANOVA decomposition (replace $\mE$ in ANOVA by the empirical expectation $\mE^n$ with $n$ samples) to measure the interactions:

\begin{equation}\label{eq:empANOVA}
 \I^n_T(\bx_T) = \prod_{i\in T} (I_{x_i}-\mE^n_{x_i})\prod_{j\notin T}\mE^n_{x_j}f(x_1,...,x_p).
\end{equation}

We have the following theorem for the complexity of calculating this measure for AddNN,

\begin{theorem}
  If there exist inputs clusters $\{T^\ast_j\}_{j=1}^{k^\ast}$ such that $f^\ast(\bx) = \sum_{j=1}^{k^\ast}g^\ast_j(\bx_{T^\ast_j})$ with $k^\ast$
  of the order of a polynomial in $p$ and $c = \max_{j=1}^{k^\ast}|T^\ast_j| = O(\log p)$, then there exists a trained AddNN that predicts $\by$ well and restricts the number
  of possible interactions to be at most a polynomial in $p$.
Further, if every sub neural network has $L$ layers with $d$ hidden units, then the complexity of \eqref{eq:empANOVA} is at most $n^ck^\ast d^{2L-1}$, which is also polynomial in $p$. 
\end{theorem}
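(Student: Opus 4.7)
I would split the theorem into three claims and handle each in turn: (i) \emph{existence} of a trained AddNN that predicts well, (ii) the polynomial bound on the number of candidate interactions, and (iii) the $n^{c}k^{\ast}d^{2L-1}$ cost of evaluating the empirical ANOVA score in \eqref{eq:empANOVA}. The key structural observation powering (ii) and (iii) is that in $f(\bx)=\sum_{j=1}^{k}g_j(\bx)$, linearity of the ANOVA operators gives $\I^n_T(f)=\sum_{j}\I^n_T(g_j)$ and, crucially, $\I^n_T(g_j)=0$ whenever $T\not\subseteq T_j$, where $T_j$ is the set of inputs actually used by sub-network $g_j$.

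\textbf{Step 1: existence.} I would apply universal approximation component-by-component. Because $|T^{\ast}_j|\leq c=O(\log p)$, each $g^{\ast}_j$ is a bounded function on a low-dimensional space and can be $\epsilon$-approximated by a feed-forward network $g_j$ of depth $L$ and width $d$ (polynomial in $1/\epsilon$ and $c$). Summing the $g_j$'s yields an AddNN $f=\sum_{j=1}^{k^{\ast}}g_j$ with $\|f-f^{\ast}\|\lesssim k^{\ast}\epsilon$, which predicts $\by$ well. To justify that the \emph{trained} AddNN respects the decomposition, I would put a group-lasso penalty on the first layer of each sub-network, turning each incoming weight for input $x_i$ into a group; a minimizer of the regularized objective then has each $g_j$ depending on at most $|T_j|\leq c$ inputs (up to a choice of the regularization strength), and the construction above shows this minimizer achieves small training loss.

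\textbf{Step 2: counting interactions.} Because $\I^n_T(g_j)=0$ unless $T\subseteq T_j$, the set of possibly non-zero interactions is $\bigcup_{j=1}^{k^{\ast}}2^{T_j}$, which has cardinality at most $k^{\ast}\cdot 2^{c}$. Since $c=O(\log p)$, $2^{c}=\mathrm{poly}(p)$, and $k^{\ast}=\mathrm{poly}(p)$ by hypothesis, so the total is polynomial in $p$ as required.

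\textbf{Step 3: computational complexity of \eqref{eq:empANOVA}.} Fix a candidate $T$ with $|T|\leq c$. By linearity, $\I^n_T(\bx_T)=\sum_{j:\,T\subseteq T_j}\I^n_T(g_j)(\bx_T)$, so at most $k^{\ast}$ sub-networks contribute. For each such $g_j$, expanding $\prod_{i\in T}(I_{x_i}-\mE^n_{x_i})$ and using that $g_j$ depends only on variables in $T_j$ shows the quantity reduces to a weighted sum of empirical averages of the form $\mE^n_{x_{S}}g_j(\cdot)$ for $S\subseteq T_j\setminus T$ together with partial expectations over $T_j\setminus S\setminus T$; caching these averages across the $2^{|T|}$ terms, the total number of forward passes through $g_j$ is bounded by $n^{|T_j|}\leq n^{c}$. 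Each forward pass through a depth-$L$, width-$d$ network costs $O(d^{2L-1})$ under the paper's accounting, giving $n^{c}d^{2L-1}$ per sub-network and $n^{c}k^{\ast}d^{2L-1}$ in total; this is polynomial in $p$ since $n^{c}=n^{O(\log p)}=\mathrm{poly}(p)$ for fixed $n$, and $k^{\ast}=\mathrm{poly}(p)$.

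\textbf{Main obstacle.} The cleanest part is the combinatorial/complexity bookkeeping in Steps 2--3, which follows mechanically from additivity and the sparsity $|T_j|\leq c$. The hard (and somewhat informal) part is Step 1: to say that the group-lasso-regularized training actually returns an AddNN whose sub-networks inherit the sparsity pattern requires a recovery-type argument (in the spirit of sparse additive models), and the phrase ``predicts $\by$ well'' must be interpreted as existence of a low-risk configuration rather than a guarantee about any particular optimizer. I would therefore present Step 1 as an existence claim: construct $f$ by approximating each $g^{\ast}_j$ individually, then note that this construction is feasible for the regularized objective, rather than attempting a full statistical recovery proof.
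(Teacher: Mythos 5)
Your proposal is correct and follows essentially the same route as the paper's own proof: the vanishing of $\I^n_T(g_j)$ when $T\not\subseteq T_j$, the $k^\ast 2^c$ count of candidate interactions, and the reduction of the cost to evaluating each sub-network on an $n^{|T^\ast_j|}\leq n^c$ grid of observed input combinations at $d^{2L-1}$ per forward pass. The only difference is that you flesh out the existence step (universal approximation plus group lasso) more explicitly than the paper, which simply asserts the existence of a trained $\hat f=\sum_j\hat g_j(\bx_{T^\ast_j})$; your reading of ``predicts $\by$ well'' as an existence claim matches the paper's intent.
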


The result does not hold for an arbitrary NN, which has $2^p$ possible interactions. The complexity of \eqref{eq:empANOVA} is $n^p(k^\ast d)^{2L-1}$, exponential in $p$. 
AddNN is far more efficient when $f^\ast$ has additive structure.\\

{\bf Uncertainty in AddNN:} We show the additive neural network computation skeleton as an example in Fig. \ref{fig:blocks} (h). Then, we can easily construct the Bayesian formulation of AddNN with various uncertainty estimates methods. We only require a specific design of the first layer for variable selection. For every sub-neural network, the first layer is
only built with $FB$ and the prior on the weights is $p(\bV^1)\sim \exp(-\sum_{i=1}^p||\bv^1_i||_2)$ where $\bv^1_i$ refers to the weight vector emanating
from the $i$th input. The variational posterior $q(\bV^1)$
is the probability mass function to make top layers stable. 

\section{Experiments}\label{sec:exp}
We first evaluate the performance of our AddNN model
on synthetic experiments
for regression and interaction detection for additive functions.
Then, we use Alg. \ref{alg:constructBNN} to
construct four different types of AddNNs (where each provides uncertainty estimates)
and check its utility for prediction and identifying interaction strength.
Finally, we show how AddNN can infer
main effects and statistical interactions of features with
uncertainties for interpretability. Further,
we use AddNN on eight benchmark datasets used in existing papers
to show that our model offers competitive results.

\paragraph{AddNN, BNN, BART and NID on prediction accuracy and interaction detection.}
We compare AddNN with BNN (with a single neural network),
BART (Bayesian additive regression tree) and NID (Neural interaction detection) in terms of
prediction accuracy and interaction detection.
For AddNN, we use the setup in \S\ref{sec:interaction}, where the group Lasso penalty is applied on the first layer.
We use $10$ compact sub-NNs for AddNN and a
single (but more complex) neural network for BNN (see supplement).
For BART and NID, we use the setup in \cite{Chi10, Tsa17}.
Both AddNN and BNN here are based on the MC dropout type construction (see \S\ref{sec:skeleton}).
First, we compare RMSE (root mean-squared-error). We run $4$ synthetic
experiments using the functions in the left Tab. \ref{tab:realdata}.
For every experiment, we use one function $f$ in the left Tab. \ref{tab:realdata} to
generate $5000$ train/test samples ($10$ features, $1$ response),
where for every input $\bx$, each dimension of the inputs are i.i.d. generated from the uniform distribution
on $(0,1]$ and the response $y$ is $y =f(\bx)+\epsilon$, with $\epsilon\sim N(0,1)$.
  From Tab. \ref{tab:accuracy}, we see that the AddNN yields
  comparable (and sometimes better) RMSE compared to baselines. 
  Though the prediction performance is similar, note that AddNN is a much
  more compact design: AddNN has just $\sim$500 edges while the BNN has $7000$ edges, NID has $2000$ edges and
  BART has $200$ trees (see Tab. \ref{tab:accuracy}).

  Next, we compare AddNN and NID for interaction detection (other two baselines are not applicable).
  To detect interactions, AddNN first calculates the interaction functions from \eqref{eq:empANOVA},
  then their empirical $\ell_2$ norms are used as the ``interaction strength'', and then AddNN selects the top $k$ interactions.
  Possible interaction candidates are based on the group-Lasso clusters for every ``sub-NN''
  in our additive model.
  For NID, we use the setup in \cite{Tsa17}.
  We run the same experiments as the RMSE setting using left Tab. \ref{tab:realdata}.
  To assess ranking quality, we use the top-rank recall metric \cite{Tsa17}: 
  a recall of interaction rankings where only those interactions
  that are correctly ranked before we encounter any false positives are considered.
  Only one superset interaction from each sub-function of $f$ is counted as a true interaction.
  From Tab. \ref{tab:accuracy}, we see that the AddNN outperforms NID for interaction detection. \\

\begin{table}
	\caption{\label{tab:realdata} (left) Synthetic functions used in our experiments, based on \cite{Tsa17};(right) Average test performance in RMSE for AddNN(ours) and MC dropout on benchmarks. }
	\begin{minipage}{.5\textwidth}
		\centering
		{
			\begin{tabular}{cc}
				Method                 & Formula                                                                                   \\ \hline
				\multirow{2}{*}{$f_1$} & $10\sin(\pi x_1x_2) + 20(x_3 - .5)^2$                                                     \\
				& $+ 10x_4 + 5x_5$                                                                          \\
				\hline
				\multirow{2}{*}{$f_2$} & $10\exp(x_1x_2) - 20\cos(x_3+x_4+x_5)$                                                    \\
				& $+ 7\arcsin(x_9x_{10})$                                                                   \\ \hline
				\multirow{2}{*}{$f_3$} & $\exp(|x_1x_2| + 1) + \exp(|x_3+x_4| + 1)$                                                \\
				& $- 19\cos(x_5 + x_6) - 10\sqrt{x_8^2 + x_9^2 + x_{10}^2}$ \\
				\hline
				\multirow{2}{*}{$f_4$} & $\frac{1}{1+x_1^2 + x_2^2 + x_3^2} - 5\sqrt{\exp(x_4 + x_5)}$                             \\
				& $+ 10|x_6 + x_7| + 6x_8x_9x_{10}$
				\\ \hline                                                        
			\end{tabular}
		}
	\end{minipage}
	\begin{minipage}{.5\textwidth}
		\centering
		{
			\begin{tabular}{|c|c|c|}
				\hline
				Measure & RMSE (AddNN) & RMSE (\cite{Gal16})\\
				\hline
				Boston & $3.03\pm 0.12$ & $2.97 \pm 0.19$ \\
				\hline
				Concrete & $5.18 \pm 0.14$ & $5.23 \pm 0.12$ \\
				\hline
				Energy & $0.65 \pm 0.03$ & $1.66 \pm 0.04$ \\
				\hline
				Kin8nm & $0.07 \pm 0.00$ & $0.10 \pm 0.00$ \\
				\hline
				Naval & $0.01 \pm 0.00$ & $0.01 \pm 0.00$\\
				\hline
				Power & $4.04\pm 0.03$ & $4.02 \pm 0.04$\\
				\hline
				Protein & $4.07\pm 0.01$ & $4.36\pm 0.01$\\
				\hline
				Wine & $0.66\pm 0.01$ & $0.62 \pm 0.01$\\
				\hline
			\end{tabular}
		}
	\end{minipage}
\end{table}
\begin{table}
  \caption{\label{tab:accuracy}
    Comparisons between AddNN, BNN, BART and NID. BNN and BART do not detect interactions.}
\centering
{
\begin{tabular}{|c|c|c|c|c|c|c|c|c|c|c|}
\hline
 & \multicolumn{4}{|c|}{RMSE} & \multicolumn{6}{|c|}{Top rank recall (noise, $\sigma^2=1,3,5$)}\\
\hline
 & AddNN ($0.5k$) & BNN ($7k$) & BART & NID ($20k$) & \multicolumn{3}{|c|}{\textbf{Ours} (AddNN)} & \multicolumn{3}{|c|}{NID}\\
\hline
$f_1$ & $1.07 \pm 0.01$ & $1.15 \pm 0.01 $ & $1.07 \pm 0.01$ & $1.09 \pm 0.01$ & \textbf{1} & \textbf{1} & \textbf{1} & 1 & 1 & 1\\
\hline
$f_2$ & $1.16 \pm 0.01$ & $1.22 \pm 0.02 $ & $1.43 \pm 0.02$ & $1.44 \pm 0.02$ & \textbf{1} & \textbf{1} & \textbf{2/3} & 1 & 2/3 & 0\\
\hline
$f_3$ & $1.35 \pm 0.01$ & $1.32 \pm 0.01 $ & $1.24 \pm 0.02$ & $1.42 \pm 0.02$ & 3/4 & \textbf{3/4} & \textbf{2/4} & 1 & 2/4 & 1/4\\
\hline
$f_4$ & $1.13 \pm 0.01$ & $1.13 \pm 0.01 $ & $1.17 \pm 0.01$ & $1.40 \pm 0.02$ & \textbf{3/4} & \textbf{3/4} & \textbf{2/4} & 2/4 & 2/4 & 1/4\\
\hline
\end{tabular}
}
\end{table}

{\bf Four different types of AddNN.} As described in \S\ref{sec:skeleton}, we can derive other uncertainty schemes
\begin{wrapfigure}[12]{r}{0pt}
	\includegraphics[width=0.35\textwidth]{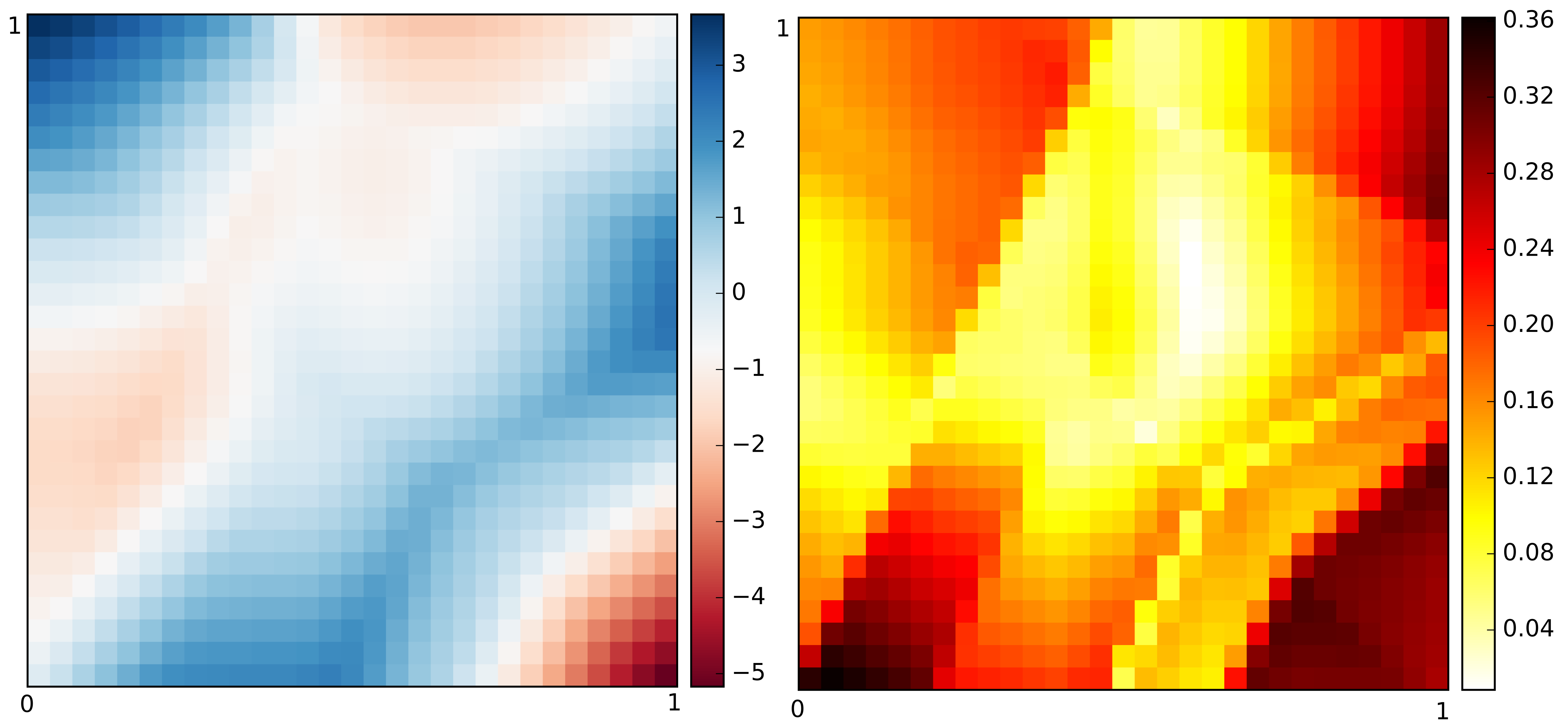}
	\caption{\small\label{fig:Finteract} Interaction between $x_1$ and $x_2$ for $f_1$ in Tab. \ref{tab:realdata}.
          Left (and right) image shows the mean interaction (and its standard deviation). 
	}		
\end{wrapfigure}
using AddNN.
Then, we can calculate uncertainty based on each of these schemes.
Here, we use the mean log likelihood (MLL) and the empirical $\ell_2$ norm of the
interaction or main effect function from \eqref{eq:empANOVA}.
The empirical $\ell_2$ norm measures the strength of the interaction or main effect.
We calculate the uncertainty for AddNN but do not compare with NID \cite{Tsa17}
since it cannot model uncertainty.
Here, we show an example for $f_1$ from left Tab. \ref{tab:realdata}.
Tab. \ref{tab:uncertainty} shows that that all four methods (derivable from our proposal)
correctly yields interactions between $x_1$ and $x_2$ as well as the main effects.
AddNN provides predictions {\em and} interactions with uncertainties. 

{\bf Uncertainty and interpretability using AddNN.}
We show one representative example showing how \eqref{eq:empANOVA} with our formulation can be used
to model the interaction between $x_1$ and $x_2$ for
$f_1$ in Tab. \ref{tab:realdata}. We plot the average interaction function and the uncertainty function in
Fig. \ref{fig:Finteract} as a heatmap -- this ability is rarely available
for deep neural network models and can be very useful for interpretability. \\


{\bf Benchmark experiments.}
Finally, we apply AddNN on common datasets used by other authors \cite{Gal16, Sal17}. Here, as shown in right Tab. \ref{tab:realdata}, we find that AddNN (which is a more compact model) yields competitive performance in addition to the other features it natively
provides such as interaction (interpretability) and
uncertainty discussed above. This implies that these additional benefits do not come at a cost of
performance. The complete table with previous works are in supplement. 

\begin{table}
  \caption{\label{tab:uncertainty} Constructing multiple types of uncertainty estimates for AddNN of $f_1$.}
\centering
\begin{adjustbox}{max width=\textwidth}
{
\begin{tabular}{|c|c|c|c|c|c|c|c|}
\hline
\multirow{2}{*}{Measure} &  \multirow{2}{*}{MLL} &
\multicolumn{1}{|c|}{Interaction} & \multicolumn{5}{|c|}{Main effect}\\
\cline{3-8}
\multirow{2}{*}{} & \multirow{2}{*}{} & $(1,2)$ & $1$ & $2$ & $3$ & $4$ & $5$\\
\hline
MC dropout & $-1.61 \pm 0.09$ & $1.51 \pm 0.05$ & $2.44\pm 0.15$ & $2.35\pm 0.10$ & $1.69 \pm 0.06$ & $3.18 \pm 0.04$ & $1.63\pm 0.03$\\
\hline
RF & $-1.60 \pm 0.09$ & $1.52 \pm 0.06$ & $2.39 \pm 0.08$ & $2.31 \pm 0.11$ & $1.70 \pm 0.11$ & $3.19\pm 0.06$ & $1.61\pm 0.04$\\
\hline
DKL & $-1.53 \pm 0.08$ & $1.59 \pm 0.04$ & $2.44\pm 0.23$ & $2.32 \pm 0.13$ & $1.70\pm 0.08$ & $3.16 \pm 0.06$ & $1.61 \pm 0.04$\\
\hline
DRF & $-1.56 \pm 0.07$ & $1.40 \pm 0.02$ & $2.36 \pm 0.05$ & $2.35 \pm 0.10$  & $1.71 \pm 0.03$ & $3.15 \pm 0.03$ & $1.59 \pm 0.02$\\
\hline
\end{tabular}
}
\end{adjustbox}
\end{table}
\vspace{-7pt}
\section{Discussion}
We presented a scheme by adapting the computation skeleton idea to construct BNNs. Our models can be trained using modern innovations
including mini-batch training, dropout, and automatic differentiation. We showed that a broad class of BNNs, realized by our framework,
ties nicely to DGPs, deep kernel learning, MC dropout and other topics.
As a special case,  we proposed an Bayesian additive neural network
that competes favorably with state-of-the-arts methods and provides uncertainty and interpretability, via statistical interactions.\\

{\bf Acknowledgments} We thank Grace Wahba and Ronak Mehta for suggestions and ideas. This work is supported by NIH Grants R01 AG040396, R01 EB022883, UW CPCP AI117924, and R01 AG021155, and NSF Awards DMS 1308877 and CAREER 1252725. We received partial support from NIH Grants UW ADRC AG033514 and UW ICTR 1UL1RR025011 and funding from a University of Wisconsin–Madison/German Center for Neurodegenerative Diseases collaboration initiative.

\bibliographystyle{plain}
\bibliography{BANN}

\newpage

\section{Supplement}
\setcounter{theorem}{0}
\setcounter{definition}{0}
In this supplement, we first discuss the extensions of the model to multiple outputs and classification and discuss how to incorporate the bias terms in NN. Then we show the proofs for the theorems in the main body. Finally, we present details for models used in experiments and provide more experiment results.

\subsection{The extension to multiple outputs, classification and including bias terms}
For the output $\by\in \mR^d$, we permit our computation skeleton to have $d$ output nodes as well. Then after we run our construction algorithm, we obtain a BNN with $d$ outputs $\bdf^L$ at the last layer. Then the analysis and properties for the single output case also hold for the multiple output case.\\

In regression task, we assume the likelihood $p(\by|\bF^L)$ of the output $\by$ to be a normal distribution, given the input matrix $\bX$ with $n$ samples and the relevant output $\bF^L$ at the last layer of BNN. We output $\bF^L$ to estimate the mean of $\by$. The relevant loss in the optimization of ELBO is the mean square loss. This is usually considered for the regression task. In a classification task with $\by \in \{0,1,...,k\}$ in $k$ categories, we assume the likelihood $p(y_i|\bF^L_{i.})= \frac{\exp(\bF^L_{iy_i})}{\sum_{j=1}^k \exp(\bF^L_{ij})}$ for $1\leq i\leq n$. Then the relevant loss in the optimization of ELBO is 

$$\frac{1}{n}\sum_{i=1}^n \log\left(\frac{\exp(\bF^L_{iy_i})}{\sum_{j=1}^k \exp(\bF^L_{ij})}\right).$$ 

We can add bias terms into the framework. The bias term with random weight can either be incorporated into the construction of $RB$ \cite{Dan16} or be treated as a parameter in the BNN \cite{Gal16}. We refer one to see these two works for incorporating bias terms.\\

{\bf Remark.} The statements for the extension to multiple outputs and classification hold for DGPs as well.

\subsection{Proofs for theorems in the main body}
In this section, we give the proofs for theorems in the main body.
\subsubsection{The proof for the relation between activation functions and kernels}
First, we prove theorems on the relation between activation functions and kernels.

{\bf The uniform concentration bound for $C$-bounded activation functions and its proof}

First, we present the uniform concentration bound for $C$-bounded activation functions and its proof. 

{\bf Theorem 0.} 
If the activation function $\sigma_{\mK}$ is $C$-bounded, meaning it is continuously differentiable and $||\sigma_{\mK}||_\infty, ||\sigma_{\mK}'||_\infty \leq C$, 
then for every $1<\ell\leq L$, on a compact set $\mM\in \mR^d$ with diameter $diam(\mM)$, 
with probability at least $1 - c_1 diam(\mM)^2\exp\left\{-\frac{\epsilon^2 r}{8(1+d)C}\right\}$,

$$\sup_{\sigma(\bdf^{\ell-1}(\bx)), \sigma(\bdf^{\ell-1}(\bx'))\in \mM}
|\hat{\mK}^\ell(\bdf^{\ell-1}(\bx),\bdf^{\ell-1}(\bx'))-\mK^\ell(\bdf^{\ell-1}(\bx),\bdf^{\ell-1}(\bx'))| \leq \epsilon,$$

for a constant $c_1>0$.

\begin{proof}
(a) For a $C$-bounded activation function, since $||\sigma_{\mK}(\cdot)||_\infty\leq C$, for fixed $\bdf^{\ell-1}(\bx)$ and $\bdf^{\ell-1}(\bx')$, 
the $r$ random variables $\{\sigma_{\mK}(\sigma(\bdf^{\ell-1}(\bx))^T\bw_i)\sigma_{\mK}(\sigma(\bdf^{\ell-1}(\bx'))^T\bw_i)\}_{i=1}^r$ are independent and lie in a bounded interval $[-C,C]$. 
Then using Hoeffdings' inequality, we get that

\begin{equation}\label{eq: thm1proof1}
\mP(|\hat{\mK}^\ell(\bdf^{\ell-1}(\bx),\bdf^{\ell-1}(\bx'))-\mK^\ell(\bdf^{\ell-1}(\bx),\bdf^{\ell-1}(\bx'))| \geq \epsilon)\leq 2\exp(-\frac{2r\epsilon^2}{4C^2}).
\end{equation}

Next we show that for a compact set $\mM$ of $\mR^d$ with diameter $diam(\mM)$, with probability at least $1-2^{11}\left(\frac{C^4 d\ diam(\mM)^2}{\epsilon^2 r}\right)^{\frac{d}{1+d}}\exp\left\{-\frac{r\epsilon^2}{8(1+d)C^2}\right\}$,

$$\sup_{\sigma(\bdf^{\ell-1}(\bx)), \sigma(\bdf^{\ell-1}(\bx'))\in \mM}
|\hat{\mK}^\ell(\bdf^{\ell-1}(\bx),\bdf^{\ell-1}(\bx'))-\mK^\ell(\bdf^{\ell-1}(\bx),\bdf^{\ell-1}(\bx'))| \leq \epsilon .$$

Since $\mM$ has diameter $diam(\mM)$, we can find $\delta$-net that covers $\mM$ using at most $T = (4 diam(\mM)/\delta)^d$ balls of radius $\delta$. 
Let $\{\Delta_i\}_{i=1}^{T}$ denote the centers of these balls. Then using \eqref{eq: thm1proof1} and union bounds, for any two centers, such as $\Delta_1$ and $\Delta_2$, with probability at least $1-2\exp(\log(T^2)-\frac{2r\epsilon^2}{16C^2})$,

\begin{equation}\label{eq:thm1proof2}
|\hat{\mK}^\ell(\Delta_1,\Delta_2)-\mK^\ell(\Delta_1,\Delta_2)| \leq \frac{\epsilon}{2}.
\end{equation}

For the function $\bu(\bx,\bx')=\hat{\mK}^\ell(\bdf^{\ell-1}(\bx),\bdf^{\ell-1}(\bx'))-\mK^\ell(\bdf^{\ell-1}(\bx),\bdf^{\ell-1}(\bx'))$, we have the inequality from partial derivative that

\begin{equation}\label{eq:thm1proof3}
|\bu(\bx,\bx')-\bu(\bx_0,\bx'_0)| \leq L_{\sigma_{\mK}}(||\sigma(\bdf^{\ell-1}(\bx))-\sigma(\bdf^{\ell-1}(\bx_0))||_2+||\sigma(\bdf^{\ell-1}(\bx'))-\sigma(\bdf^{\ell-1}(\bx'_0))||_2),
\end{equation}

where 

\begin{align*}
L_{\sigma_{\mK}} &= \arg\max_{\sigma(\bdf^{\ell-1}(\bx)),\sigma(\bdf^{\ell-1}(\bx'))\in \mM}
||\frac{1}{r}\sum_{i=1}^r \frac{\partial \sigma_{\mK}(\sigma(\bdf^{\ell-1}(\bx))^T\bw_i)}{\partial \sigma(\bdf^{\ell-1}(\bx))}\sigma_{\mK}(\sigma(\bdf^{\ell-1}(\bx'))^T\bw_i)\\
&-\mE_\bw \frac{\partial \sigma_{\mK}(\sigma(\bdf^{\ell-1}(\bx))^T\bw)}{\partial \sigma(\bdf^{\ell-1}(\bx))}\sigma_{\mK}(\sigma(\bdf^{\ell-1}(\bx'))^T\bw)||_2\\
&= 
||\frac{1}{r}\sum_{i=1}^r \frac{\partial \sigma_{\mK}(\sigma(\bdf^{\ell-1}(\bx^\ast))^T\bw_i)}{\partial \sigma(\bdf^{\ell-1}(\bx))}\sigma_{\mK}(\sigma(\bdf^{\ell-1}(\bx'^\ast))^T\bw_i)\\
&-\mE_\bw \frac{\partial \sigma_{\mK}(\sigma(\bdf^{\ell-1}(\bx^\ast))^T\bw)}{\partial \sigma(\bdf^{\ell-1}(\bx))}\sigma_{\mK}(\sigma(\bdf^{\ell-1}(\bx'^\ast))^T\bw)||_2.
\end{align*}

We also have that

\begin{align*}
\mE L_{\sigma_{\mK}}^2 &= 
\mE||\frac{1}{r}\sum_{i=1}^r \frac{\partial \sigma_{\mK}(\sigma(\bdf^{\ell-1}(\bx^\ast))^T\bw_i)}{\partial \sigma(\bdf^{\ell-1}(\bx))}\sigma_{\mK}(\sigma(\bdf^{\ell-1}(\bx'^\ast))^T\bw_i)\\
&-\mE_\bw \frac{\partial \sigma_{\mK}(\sigma(\bdf^{\ell-1}(\bx^\ast))^T\bw)}{\partial \sigma(\bdf^{\ell-1}(\bx))}\sigma_{\mK}(\sigma(\bdf^{\ell-1}(\bx'^\ast))^T\bw)||_2^2\\
&=
\mE||\frac{1}{r}\sum_{i=1}^r \frac{\partial \sigma_{\mK}(\sigma(\bdf^{\ell-1}(\bx^\ast))^T\bw_i)}{\partial \sigma(\bdf^{\ell-1}(\bx))}\sigma_{\mK}(\sigma(\bdf^{\ell-1}(\bx'^\ast))^T\bw_i)||_2^2\\
&-||\mE_\bw \frac{\partial \sigma_{\mK}(\sigma(\bdf^{\ell-1}(\bx^\ast))^T\bw)}{\partial \sigma(\bdf^{\ell-1}(\bx))}\sigma_{\mK}(\sigma(\bdf^{\ell-1}(\bx'^\ast))^T\bw)||_2^2\\
&=
\frac{1}{r^2}\sum_{i=1}^r\mE|| \frac{\partial \sigma_{\mK}(\sigma(\bdf^{\ell-1}(\bx^\ast))^T\bw_i)}{\partial \sigma(\bdf^{\ell-1}(\bx))}\sigma_{\mK}(\sigma(\bdf^{\ell-1}(\bx'^\ast))^T\bw_i)||_2^2\\
&-\frac{1}{r^2}||\mE_\bw \frac{\partial \sigma_{\mK}(\sigma(\bdf^{\ell-1}(\bx^\ast))^T\bw)}{\partial \sigma(\bdf^{\ell-1}(\bx))}\sigma_{\mK}(\sigma(\bdf^{\ell-1}(\bx'^\ast))^T\bw)||_2^2\\
&\leq 
\frac{1}{r^2}C^4 \sum_{i=1}^r \mE||\bw_i||_2^2\\
&=
\frac{C^4 d}{r}.
\end{align*}

Therefore, by Markov's inequality,

$$\mP(L_{\sigma_{\mK}} \geq \frac{\epsilon}{4 \delta}) \leq \mE L_{\sigma_{\mK}}^2 \frac{16\delta^2}{\epsilon^2} \leq \frac{16 \delta^2 C^4 d}{\epsilon^2 r}$$

Then using Eq. \eqref{eq:thm1proof3}, with probability at least $1-\frac{16 \delta^2 C^4 d}{\epsilon^2 r}$,

$$|u(\bx,\bx')-u(\bx_0,\bx'_0)|\leq \frac{\epsilon}{2}$$

This inequality combined with Eq. \eqref{eq:thm1proof2} enables us to conclude that

$$\sup_{\sigma(\bdf^{\ell-1}(\bx)), \sigma(\bdf^{\ell-1}(\bx'))\in \mM}
|\hat{\mK}^\ell(\bdf^{\ell-1}(\bx),\bdf^{\ell-1}(\bx'))-\mK^\ell(\bdf^{\ell-1}(\bx),\bdf^{\ell-1}(\bx'))| \leq \epsilon .$$

with probability at least $1-\frac{16 \delta^2 C^4 d}{\epsilon^2 r}-2\exp(\log(T^2)-\frac{2r\epsilon^2}{16C^2})$. Recall that $T = (4 diam(\mM)/\delta)^d$, so the probability has a format of $1-\kappa_1 \delta^2 - \kappa_2 \delta^{-2d}$ for $\delta$. By setting $\delta = \frac{\kappa_2}{\kappa_1}^{\frac{1}{2+2d}}$, we have the probability as $1-2\kappa_1^{\frac{2d}{2+2d}}\kappa_2^{\frac{2}{2+2d}}$. So the probability is at least

$$1-2^{11}\left(\frac{C^4 d\ diam(\mM)^2}{\epsilon^2 r}\right)^{\frac{d}{1+d}}\exp\left\{-\frac{r\epsilon^2}{8(1+d)C^2}\right\}$$

\end{proof}

{\bf Proof of Theorem 1 for ReLU}

We have seen how to control the distance between empirical kernel and the expectation kernel uniformly for $C$-bounded activation functions, now we present the proof for ReLU activation functions.

\begin{theorem}\label{thm:1}
If the activation function $\sigma_{\mK}$ is ReLU, then for every $1\leq \ell \leq L$, on a compact set $\mM \in \mR^d$ with diameter $diam(\mM)$ and $\max_{\Delta\in \mM}||\Delta||_2\leq c_\mM$, with probability at least $1-c_1c_{\mM} diam(\mM)^2\exp\left\{-\frac{r\epsilon^2}{8(1+d)\nu_{\mM}^2}\right\}$,

$$\sup_{\sigma(\bdf^{\ell-1}(\bx)), \sigma(\bdf^{\ell-1}(\bx'))\in \tilde{\mM}}
|\hat{\mK}^\ell(\bdf^{\ell-1}(\bx),\bdf^{\ell-1}(\bx'))-\mK^\ell(\bdf^{\ell-1}(\bx),\bdf^{\ell-1}(\bx'))| \leq \epsilon,$$

for a constant $c_1> 0$ and a parameter $\nu_\mM$ depending on $\mM$. Here, $\tilde{\mM}$ 
specifies that we require $\sigma(\bdf^{\ell-1}(\bx))$ and $\sigma(\bdf^{\ell-1}(\bx'))$ to be two vectors in $\mM$ 
that are not collinear.
\end{theorem}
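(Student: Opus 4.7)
My plan is to mirror the structure of the $C$-bounded proof of Theorem 0 but handle the two complications that arise from ReLU: unboundedness of $\sigma_{\mK}$ (only its derivative is bounded, by $1$ a.e.) and non-differentiability of $\sigma_{\mK}$ at the origin. The final probability bound has the same Hoeffding-like shape as in Theorem 0, with the universal constant $C$ replaced by a data-dependent scale $\nu_\mM$ absorbing $c_\mM$ and moments of $\|\bw\|$.

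\textbf{Step 1 (pointwise concentration).} Fix $\bx,\bx'$ with $\|\sigma(\bdf^{\ell-1}(\bx))\|_2,\|\sigma(\bdf^{\ell-1}(\bx'))\|_2\leq c_\mM$. The summands $X_i=\sigma_{\mK}(\sigma(\bdf^{\ell-1}(\bx))^T\bw_i)\sigma_{\mK}(\sigma(\bdf^{\ell-1}(\bx'))^T\bw_i)$ are i.i.d.\ products of truncated Gaussian linear forms, hence sub-exponential with a scale controlled by $c_\mM^2$ and the Gaussian second moments. A Bernstein-type inequality (or, equivalently, a truncation of $\|\bw\|_2$ at an $O(\sqrt{d})$ threshold followed by Hoeffding; the truncation tail is absorbed into a universal constant) yields a pointwise bound of the form $\mP(|\hat{\mK}^\ell(\bx,\bx')-\mK^\ell(\bx,\bx')|\geq \epsilon/2)\leq 2\exp(-cr\epsilon^2/\nu_\mM^2)$, matching the exponent that appears in the statement.

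\textbf{Step 2 (Lipschitz control).} Because ReLU is $1$-Lipschitz almost everywhere, $\bx\mapsto\sigma_{\mK}(\bx^T\bw)$ is $\|\bw\|_2$-Lipschitz and bounded by $c_\mM\|\bw\|_2$ on $\mM$, so each summand is Lipschitz in $(\bx,\bx')$ with constant $2c_\mM\|\bw\|_2^2$. As in the proof of Theorem 0, I bound $\mE L_{\sigma_{\mK}}^2$ by averaging these squared Lipschitz constants; since $\mE\|\bw\|_2^4\lesssim d^2$, one obtains $\mE L_{\sigma_{\mK}}^2=O(c_\mM^2 d^2/r)$. Markov's inequality then controls the Lipschitz constant of $\hat{\mK}^\ell-\mK^\ell$. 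The ReLU kink is harmless because under Gaussian $\bw$ the event $\{\sigma(\bdf^{\ell-1}(\bx))^T\bw=0\}$ has measure zero, so gradients exist almost surely and the integral identity used to swap $\mE$ and $\partial$ still applies.

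\textbf{Step 3 (net plus union bound).} Cover $\mM$ with a $\delta$-net of cardinality $O((c_\mM\,\mathrm{diam}(\mM)/\delta)^d)$, apply Step 1 at every pair of net centers by a union bound, and extend to all of $\tilde\mM\times\tilde\mM$ via Step 2. Optimizing $\delta$ exactly as in the Theorem 0 calculation (where balancing $\kappa_1\delta^2$ and $\kappa_2\delta^{-2d}$ gives the $d/(1+d)$ exponent) delivers the claimed probability.

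\textbf{Step 4 (role of the collinearity exclusion).} The closed-form $\mK^\ell$ for ReLU is the arc-cosine kernel, which is continuously differentiable in $(\bx,\bx')$ \emph{away from} collinear pairs but has a derivative singularity when the angle between the two vectors is $0$ or $\pi$. Restricting to $\tilde\mM$ guarantees a uniform Lipschitz constant on the restricted domain, so the expectation in Step 2 does not blow up.

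The main obstacle is Step 2: in contrast to the $C$-bounded case where $\sigma_{\mK}$ and $\sigma_{\mK}'$ are both uniformly bounded, for ReLU we must carry $\|\bw\|_2^2$ inside the Lipschitz bound, whose expectation scales with $d$; this is exactly why the factor $(1+d)$ appears in the exponent. Handling the a.s.\ non-differentiability and, especially, verifying that the collinearity exclusion furnishes a finite uniform Lipschitz constant for the arc-cosine kernel is the delicate part of the argument.
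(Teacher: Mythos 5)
Your high-level architecture matches the paper's: a pointwise concentration bound for fixed $(\bx,\bx')$, a Lipschitz estimate for $\hat{\mK}^\ell-\mK^\ell$ controlled via $\mE L_{\sigma_{\mK}}^2$ and Markov, a $\delta$-net with a union bound, and the same $\kappa_1\delta^2$ versus $\kappa_2\delta^{-2d}$ balancing. Your Step 2 is essentially the paper's (the paper gets the sharper $\mE L_{\sigma_{\mK}}^2 = O(c_\mM (d+3)/r)$ by using $\|\sigma_{\mK}'\|_\infty\le 1$ before taking norms, but a looser $O(c_\mM^2 d^2/r)$ only changes the polynomial prefactor). However, there are two genuine problems. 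First, Step 1 is where all the work lies and you have not done it: the paper establishes sub-exponentiality by an explicit computation of $\mE_\bw[e^{\lambda u}]$ for $u=\sigma_{\mK}(\sigma(\bdf^{\ell-1}(\bx))^T\bw)\sigma_{\mK}(\sigma(\bdf^{\ell-1}(\bx'))^T\bw)$ (a change to coordinates aligned with the two vectors, then polar coordinates, following the Cho--Saul arc-cosine calculation), and the resulting sub-exponential parameter $\nu^2=c^2/\bigl(2\log\tfrac{4\pi}{3(\pi-\theta)}\bigr)$ depends on the \emph{angle} $\theta$ between $\sigma(\bdf^{\ell-1}(\bx))$ and $\sigma(\bdf^{\ell-1}(\bx'))$, not just on $c_\mM$. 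Asserting "sub-exponential with a scale controlled by $c_\mM^2$" skips exactly the quantity $\nu_\mM$ that appears in the theorem statement; a truncation-plus-Hoeffding route could be made to work but would produce a different bound and still would not explain why $\nu_\mM$ must depend on the geometry of $\mM$ beyond its radius.

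Second, your Step 4 misidentifies why collinear pairs are excluded. In the paper, $\tilde{\mM}$ is needed precisely so that the angle-dependent sub-exponential parameter from Step 1 admits a uniform upper bound $\nu_\mM$ over the set (the MGF bound is only valid for $\lambda$ in an angle-dependent window, which degenerates at $\theta\in\{0,\pi\}$); it has nothing to do with the Lipschitz step. Moreover, your claimed derivative singularity of the arc-cosine kernel at collinear pairs is not correct: writing $t=\cos\theta$, one has $\frac{d}{dt}\bigl[\sqrt{1-t^2}+(\pi-\arccos t)\,t\bigr]=\pi-\arccos t$, which is bounded on $[-1,1]$, so $\mK^\ell$ is $C^1$ in the inner product and uniformly Lipschitz on compact sets; no exclusion is needed for Step 2. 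A smaller point: the factor $(1+d)$ in the exponent comes from the covering-number balancing (the $\delta^{-2d}$ term raised to the power $\tfrac{1}{1+d}$), exactly as in the $C$-bounded case, not from carrying $\|\bw\|_2^2$ through the Lipschitz bound.
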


\begin{proof}
For the ReLU activation $\sigma_{\mK}(x) = \max(0, x)$, we use concentration bound for sub-exponential random variable to show the result for fixed points. 
We define $u = \sigma_{\mK}(\sigma(\bdf^{\ell-1}(\bx))^T\bw)\sigma_{\mK}(\sigma(\bdf^{\ell-1}(\bx'))^T\bw)$. 
Our first goal is to compute $\mE_\bw[e^{\lambda\bu}]$. 
Since $\bw$ follows a normal distribution that is symmetric, how we choose axis does not influence the results. 
Therefore, we choose axis such that $\sigma(\bdf^{\ell-1}(\bx)) = \be_1||\sigma(\bdf^{\ell-1}(\bx))||_2$ and $\sigma(\bdf^{\ell-1}(\bx')) = (\be_1\cos\theta + \be_2\sin\theta) ||\sigma(\bdf^{\ell-1}(\bx'))||_2$ where $\be_1$ and $\be_2$ refer to standard vector for the first and second axis. 
We denote $C_f = ||\sigma(\bdf^{\ell-1}(\bx))||_2 ||\sigma(\bdf^{\ell-1}(\bx'))||_2\leq c_{\mM}^2$. 

\begin{align*}
\mE_\bw[e^{\lambda u}] &= \frac{1}{2\pi}\int_{-\infty}^\infty dw_1\int_{-\infty}^\infty dw_2 e^{-\frac{1}{2}(w_1^2+w_2^2)}e^{\lambda C_f \max(0,w_1)\max(0,w_1\cos\theta+w_2\sin\theta)}.
\end{align*}

We switch $(w_1, w_2)$ by $(\tw_1, \tw_2)= (w_1, w_1\cos\theta + w_2\sin\theta)$, then we get that

\begin{align*}
\mE_\bw[e^{\lambda u}]& =  \frac{1}{2\pi \sin\theta}\int_0^\infty d\tw_1\int_0^\infty d\tw_2 e^{-\frac{\tw_1^2 + \tw_2^2 -2\tw_1\tw_2\cos\theta }{2\sin\theta^2}}e^{\lambda C_f \tw_1\tw_2}.
\end{align*}

We switch $(\tw_1, \tw_2)$ by $(\tr, \tphi)$ with $\tw_1 = \tr\sin\tphi$ and $\tw_2 = \tr\cos\tphi$, then we get that

\begin{align*}
\mE_\bw[e^{\lambda u}] & = \frac{1}{2\pi \sin\theta}\int_0^{\pi/2}d\tphi\int_0^\infty \tr d\tr e^{-\tr^2\frac{ 1-\sin 2\tphi\cos\theta }{2\sin\theta^2}}e^{\tr^2 \frac{\lambda C_f\sin 2\tphi}{2}}.
\end{align*}

Through the known mean calculation of half normal distribution that

$$\frac{a\sqrt{2}}{\sqrt{\pi}} = \int_{x\geq 0} x dx \frac{\sqrt{2}}{a\sqrt{\pi}}\exp(-\frac{x^2}{2a^2}),$$

for any $a$, we know that

$$a^2 = \int_{x\geq 0} x dx \exp(-\frac{x^2}{2a^2}),$$

for any $a$. We use this relation to calculate the integral of $\tr$ and we get that

\begin{align*}
\mE_\bw[e^{\lambda u}] &=\frac{1}{2\pi \sin\theta}\int_0^{\pi/2}d\tphi\frac{\sin\theta^2}{1-\sin 2\tphi\cos\theta - \lambda C_f \sin 2\tphi\sin\theta^2}\\
&=\frac{\sin\theta}{2\pi}\int_0^{\pi/2}d\alpha\frac{1}{1-\cos\alpha\cos\theta - \lambda C_f \cos \alpha\sin\theta^2},
\end{align*}

by switching $\tphi$ to $\alpha = 2\tphi-\frac{\pi}{2}$. It is known that

$$\int_0^\xi d\alpha \frac{1}{1-\cos\alpha\cos\theta} = \frac{1}{\sin\theta}\tan^{-1}\left(\frac{\sin\theta\sin\xi}{\cos\xi-\cos\theta}\right),$$

which can be verified by calculating the derivative of the right side \cite{Cho09}. Therefore, by setting $\xi=\frac{\pi}{2}$,

$$\int_0^{\pi}{2} d\alpha \frac{1}{1-\cos\alpha\cos\theta} =\frac{\pi-\theta}{\sin\theta}.$$

We define $\gamma = arccos(\cos\theta + \lambda C_f\sin\theta^2)$ for $0\leq \gamma \leq \pi$ under the requirement that

$$-\frac{1+\cos\theta}{C_f\sin\theta^2}\leq \lambda \leq \frac{1-\cos\theta}{C_f\sin\theta^2}.$$

Then we get that

$$\mE_\bw[e^{\lambda u}]=\frac{\sin\theta}{2\pi}\frac{\pi-\gamma}{\sin \gamma}.$$

Since $0\leq \gamma, \theta \leq \pi$, now we further assume that $-\frac{b}{C\sin\theta^2}\leq \lambda \leq \frac{b}{C\sin\theta^2}$, then $\cos\theta -b \leq \cos\gamma \leq \cos\theta +b$. For a enough small $b$, we have that
\begin{equation}\label{eq:thm1Bproof1}
\mE_\bw[e^{\lambda u}] \leq \frac{3(\pi-\theta)}{4\pi}.
\end{equation}

From \cite{Cho09},

\begin{align*}
\mE_\bw[\lambda u] =\frac{2\lambda C_f}{\pi}(\sin\theta+\cos\theta(\pi-\theta)).
\end{align*}

Therefore, we combine it with Eq. \eqref{eq:thm1Bproof1} to get that

\begin{align*}
\mE_\bw[e^{\lambda(u-\mE_\bw[u])}] \leq \frac{3(\pi-\theta)}{4\pi}\exp(-\lambda\frac{2 C_f (\sin\theta+\cos\theta(\pi-\theta))}{\pi}),
\end{align*}

for $-\frac{b}{C\sin\theta^2}\leq \lambda \leq \frac{b}{C\sin\theta^2}$ with a enough small $b$.

Because for $\frac{\pi}{2} \leq \theta \leq \pi$, $\cos\theta(\pi-\theta )\geq \cos\theta\tan(\pi-\theta)=-\sin\theta \geq 0$, we always can define $c = \frac{2 C_f (\sin\theta+\cos\theta(\pi-\theta))}{\pi}\geq 0$ and it monotonically decreases to zero at $\theta = \pi$. Then we define $\nu^2 = \frac{c^2}{2\log(\frac{4\pi}{3(\pi-\theta)})}$ that can guarantee

\begin{align*}
\mE[e^{\lambda(u-\mE[u])}]&\leq \exp(\frac{\nu^2\lambda^2}{2}),
\end{align*}

for $|\lambda|\leq \frac{b}{C\sin\theta^2}$. This means that $u$ follows a sub-exponential distribution and now we can use concentration bound to derive that

\begin{equation*}
\mP[|u - \mE[u]| \geq \epsilon]\leq 
\Big\{\begin{array}{c}
2e^{-\frac{\epsilon^2}{2\nu^2}}\ if\ 0\leq \epsilon\leq \frac{\nu^2 b}{C\sin\theta^2}\\
2e^{-\frac{\epsilon b}{2C\sin\theta^2}}\ for\ \epsilon>\frac{\nu^2 b}{C\sin\theta^2}
\end{array}.
\end{equation*}

Therefore, for a small error $\epsilon$, we can consider that 

$$\mP[|u-\mE[u]|\geq \epsilon]\leq 2e^{-\frac{\epsilon^2}{2\nu^2}}.$$

The concentration inequality also applied to the average of $r$ independent random variable $u_i$, which are defined for $r$ independent $\bw_i$. It shows that

\begin{equation}\label{eq:thm1Bproof2}
\mP[|\frac{1}{r}\sum_{i=1}^r u_i - \mE[u]| \geq \epsilon]\leq  2e^{-\frac{r\epsilon^2}{2\nu^2}}.
\end{equation}

Therefore, we obtain the concentration bound for fixed $\sigma(\bdf^{\ell-1}(\bx))$ and $\sigma(\bdf^{\ell-1}(\bx'))$. 

Next, we show the result for a set $\mM$. When $||\sigma(\bdf^{\ell-1}(\bx))||_2 ||\sigma(\bdf^{\ell-1}(\bx'))||_2$ is bounded and the angle between $\sigma(\bdf^{\ell-1}(\bx))$ and $\sigma(\bdf^{\ell-1}(\bx'))$ lies in $(0,\pi)$ meaning there is no collinearity, then we have an upper bound $\nu_{\mM}$ for $\nu$ depending on that two conditions. Therefore, we similar choose $T$ balls with radius $\delta$ to cover $\mM$ as in the proof for Theorem \ref{thm:1}.
Let $\{\Delta_i\}_{i=1}^{T}$ denote the centers of these balls. Then using \eqref{eq:thm1Bproof2} and union bounds, for any two centers, such as $\Delta_1$ and $\Delta_2$, with probability at least $1-2\exp\left(\log(T^2)-\frac{r\epsilon^2}{8\nu_{\mM}^2}\right)$,

\begin{equation}\label{eq:thm1Bproof3}
|\hat{\mK}^\ell(\Delta_1,\Delta_2)-\mK^\ell(\Delta_1,\Delta_2)| \leq \frac{\epsilon}{2}.
\end{equation}

Then similarly as in the proof for Theorem 0, 
for the function $\bu(\bx,\bx')=\hat{\mK}^\ell(\bdf^{\ell-1}(\bx),\bdf^{\ell-1}(\bx'))-\mK^\ell(\bdf^{\ell-1}(\bx),\bdf^{\ell-1}(\bx'))$, we have the inequality from partial derivative that

\begin{equation}\label{eq:thm1Bproof4}
|\bu(\bx,\bx')-\bu(\bx_0,\bx'_0)| \leq L_{\sigma_{\mK}}(||\sigma(\bdf^{\ell-1}(\bx))-\sigma(\bdf^{\ell-1}(\bx_0))||_2+||\sigma(\bdf^{\ell-1}(\bx'))-\sigma(\bdf^{\ell-1}(\bx'_0))||_2),
\end{equation}

where 

\begin{align*}
L_{\sigma_{\mK}} &= \arg\max_{\sigma(\bdf^{\ell-1}(\bx)),\sigma(\bdf^{\ell-1}(\bx'))\in \mM}
||\frac{1}{r}\sum_{i=1}^r \frac{\partial \sigma_{\mK}(\sigma(\bdf^{\ell-1}(\bx))^T\bw_i)}{\partial \sigma(\bdf^{\ell-1}(\bx))}\sigma_{\mK}(\sigma(\bdf^{\ell-1}(\bx'))^T\bw_i)\\
&-\mE_\bw \frac{\partial \sigma_{\mK}(\sigma(\bdf^{\ell-1}(\bx))^T\bw)}{\partial \sigma(\bdf^{\ell-1}(\bx))}\sigma_{\mK}(\sigma(\bdf^{\ell-1}(\bx'))^T\bw)||_2\\
&= 
||\frac{1}{r}\sum_{i=1}^r \frac{\partial \sigma_{\mK}(\sigma(\bdf^{\ell-1}(\bx^\ast))^T\bw_i)}{\partial \sigma(\bdf^{\ell-1}(\bx))}\sigma_{\mK}(\sigma(\bdf^{\ell-1}(\bx'^\ast))^T\bw_i)\\
&-\mE_\bw \frac{\partial \sigma_{\mK}(\sigma(\bdf^{\ell-1}(\bx^\ast))^T\bw)}{\partial \sigma(\bdf^{\ell-1}(\bx))}\sigma_{\mK}(\sigma(\bdf^{\ell-1}(\bx'^\ast))^T\bw)||_2.
\end{align*}

We also have that

\begin{align*}
\mE L_{\sigma_{\mK}}^2 
&=
\frac{1}{r^2}\sum_{i=1}^r\mE|| \frac{\partial \sigma_{\mK}(\sigma(\bdf^{\ell-1}(\bx^\ast))^T\bw_i)}{\partial \sigma(\bdf^{\ell-1}(\bx))}\sigma_{\mK}(\sigma(\bdf^{\ell-1}(\bx'^\ast))^T\bw_i)||_2^2\\
&-\frac{1}{r^2}||\mE_\bw \frac{\partial \sigma_{\mK}(\sigma(\bdf^{\ell-1}(\bx^\ast))^T\bw)}{\partial \sigma(\bdf^{\ell-1}(\bx))}\sigma_{\mK}(\sigma(\bdf^{\ell-1}(\bx'^\ast))^T\bw)||_2^2\\
&\leq
\frac{1}{r^2}\sum_{i=1}^r\mE|| \frac{\partial \sigma_{\mK}(\sigma(\bdf^{\ell-1}(\bx^\ast))^T\bw_i)}{\partial \sigma(\bdf^{\ell-1}(\bx))}\sigma_{\mK}(\sigma(\bdf^{\ell-1}(\bx'^\ast))^T\bw_i)||_2^2.
\end{align*}

Since $\sigma_{\mK}$ is ReLU, $||\sigma_{\mK}'||_\infty\leq 1$, therefore we get that

$$\mE L_{\sigma_{\mK}}^2 \leq \frac{1}{r}\mE|\sigma_{\mK}(\sigma(\bdf^{\ell-1}(\bx'^\ast))^T\bw)|||\bw||_2^2.$$

Again, since $\bw$ follows a normal distribution that is symmetric, we choose axis to satisfy $\sigma(\bdf^{\ell-1}(\bx'^\ast)) = \be_1 ||\sigma(\bdf^{\ell-1}(\bx'^\ast))||_2$. Then we do a calculation,

\begin{align*}
\mE L_{\sigma_{\mK}}^2 
&\leq \frac{1}{r}||\sigma(\bdf^{\ell-1}(\bx'^\ast))||_2 \int_0^\infty d w_1 \left\{\int_{-\infty}^\infty d w_2 ...\int_{-\infty}^\infty d w_d (w_1\sum_{j=1}^d w_j^2)p(w_2,...,w_d)\right\}p(w_1)\\
&= \frac{1}{r}||\sigma(\bdf^{\ell-1}(\bx'^\ast))||_2 \int_0^\infty d w_1 \{w_1^3 + (d-1)w_1)p(w_1)\}\\
&= \frac{1}{r}||\sigma(\bdf^{\ell-1}(\bx'^\ast))||_2 \int_0^\infty d w_1 \{(d+3)w_1p(w_1)\}\\
&= \frac{1}{r}||\sigma(\bdf^{\ell-1}(\bx'^\ast))||_2 \frac{\sqrt{2}(d+3)}{\sqrt{\pi}}\\
&\leq \frac{\sqrt{2}(d+3)c_{\mM}}{\sqrt{\pi}r}
\end{align*}

Therefore, by Markov's inequality,

$$\mP(L_{\sigma_{\mK}} \geq \frac{\epsilon}{4 \delta}) \leq \mE L_{\sigma_{\mK}}^2 \frac{16\delta^2}{\epsilon^2} \leq \frac{16\sqrt{2}(d+3)c_{\mM}\delta^2}{\sqrt{\pi}\epsilon^2 r}.$$

Then using Eq. \eqref{eq:thm1Bproof4}, with probability at least $1-\frac{16\sqrt{2}(d+3)c_{\mM}\delta^2}{\sqrt{\pi}\epsilon^2 r}$,

$$|u(\bx,\bx')-u(\bx_0,\bx'_0)|\leq \frac{\epsilon}{2}.$$

This inequality combined with Eq. \eqref{eq:thm1Bproof3} enables us to conclude that

$$\sup_{\sigma(\bdf^{\ell-1}(\bx)), \sigma(\bdf^{\ell-1}(\bx'))\in \mM}
|\hat{\mK}^\ell(\bdf^{\ell-1}(\bx),\bdf^{\ell-1}(\bx'))-\mK^\ell(\bdf^{\ell-1}(\bx),\bdf^{\ell-1}(\bx'))| \leq \epsilon .$$

with probability at least $1-\frac{16\sqrt{2}(d+3)c_{\mM}\delta^2}{\sqrt{\pi}\epsilon^2 r}-2\exp\left(\log(T^2)-\frac{r\epsilon^2}{8\nu_{\mM}^2}\right)$. 
Recall that $T = (4 diam(\mM)/\delta)^d$, so the probability has a format of $1-\kappa_1 \delta^2 - \kappa_2 \delta^{-2d}$ for $\delta$. By setting $\delta = \frac{\kappa_2}{\kappa_1}^{\frac{1}{2+2d}}$, we have the probability as $1-2\kappa_1^{\frac{2d}{2+2d}}\kappa_2^{\frac{2}{2+2d}}$. So the probability is at least

$$1-2^{10}\left(\frac{c_{\mM} (d+3) diam(\mM)^2}{\epsilon^2 r}\right)^{\frac{d}{1+d}}\exp\left\{-\frac{r\epsilon^2}{8(1+d)\nu_{\mM}^2}\right\}.$$

\end{proof}

\subsubsection{The relation between random feature \cite{Cut17} and inducing points approximation \cite{Sal17}}
First, we review the algorithm in \cite{Sal17} based on inducing points and doubly stochastic variational inference.

In the background section, we introduced that a $L$ layer DGP can be represented by

$$p(\by,\{\bF^\ell\}_{\ell=1}^L)=\prod_{i=1}^n p(y_i|f^L_i)\prod_{\ell=1}^L p(\bF^\ell|\bF^{\ell-1}),$$

where $\bF^\ell\in \mR^{n\times d}$ for $0\leq \ell<L$ and $\bF^L\in \mR^{n\times 1}$, with $\bF^\ell|\bF^{\ell-1}\sim N(0,\mK^\ell(\bF^{\ell-1},\bF^{\ell-1}))$.

In \cite{Sal17}, they further define an additional set of $m$ inducing points $\bZ^\ell = (\bz^\ell_1, ..., \bz^\ell_m)$ for each layer $0\leq \ell<L$. We use the notation $\bu^\ell = f^\ell(\bZ^{\ell-1})$ for the function values at the inducing points. Since we have $d$ output on layer $\ell$, we use $\bU^\ell\in \mR^{m\times d}$ for the function value matrix at the inducing points. By the definition of GP, the joint density $p(\bF^\ell, \bU^\ell)$ is a Gaussian distribution given inputs from previous layer. Therefore, we have the joint posterior of $\by, \{\bF^\ell, \bU^\ell\}_{\ell=1}^L$ is 

$$p(\by,\{\bF^\ell,\bU^\ell\}_{\ell=1}^L)=\prod_{i=1}^n p(y_i|f^L_i)\prod_{\ell=1}^L p(\bF^\ell|\bU^\ell;\bF^{\ell-1},\bZ^{\ell-1})p(\bU^\ell;\bZ^{\ell-1}).$$

The posterior of $\{\bF^\ell, \bU^\ell\}_{\ell=1}^L$ is intractable, so the authors in \cite{Sal17} define the variational posterior

$$q(\{\bF^\ell,\bU^\ell\}_{\ell=1}^L)=\prod_{\ell=1}^L p(\bF^\ell|\bU^\ell;\bF^{\ell-1},\bZ^{\ell-1})q(\bU^\ell),$$

with $q(\bU^\ell)=\prod_{j=1}^d q(\bU^\ell_{.j})$ and $q(\bU^\ell_{.j})\sim N(\bm^\ell_j, \bS^\ell_j)$.
Then they calculate the evidence lower bound of the DGP, which is

$$\ELBO_{DGP} = \mE_{q(\{\bF^\ell,\bU^\ell\}_{\ell=1}^L)}\left[\frac{p(\by, \{\bF^\ell,\bU^\ell\}_{\ell=1}^L}{q(\{\bF^\ell,\bU^\ell\}_{\ell=1}^L)}\right].$$

Based on the definition of $q(\{\bF^\ell,\bU^\ell\}_{\ell=1}^L)$, we can simplify $\ELBO_{DGP}$ and show that it is equal as

\begin{equation}\label{eq:thm2proof1}
\ELBO_{DGP}=\sum_{i=1}^n \mE_{q(f^L_i)}[\log p(y_i|f^L_i)]-\sum_{\ell=1}^L KL[q(\bU^\ell)|p(\bU^\ell;\bZ^{\ell-1})].
\end{equation}

From \cite{Sal17}, after marginalizing the inducing variables from each layer analytically, we can show that

\begin{equation}\label{eq:thm2proof2}
q(\{\bF^\ell\}_{\ell=1}^L)=\prod_{\ell=1}^L q(\bF^\ell|\bm^\ell,\bS^\ell;\bF^{\ell-1},\bZ^{\ell-1})=\prod_{\ell=1}^L N(\bF^\ell|\bmu^\ell,\bSigma^\ell) = \prod_{\ell=1}^L\prod_{j=1}^d N(\bF^\ell_{.j}|\tilde{\bmu}^\ell_j, \tilde{\bSigma}^\ell_j).
\end{equation}

Here,

$$\tilde{\bmu}^\ell_j = \balpha(\bF^{\ell-1})^T\bm^\ell_{.j}$$

$$\tilde{\bSigma}^\ell_j = \mK^\ell(\bF^{\ell-1},\bF^{\ell-1})-\balpha(\bF^{\ell-1})^T(\mK^\ell(\bZ^{\ell-1},\bZ^{\ell-1})-\bS_{.j})\balpha(\bF^{\ell-1})$$

with $\balpha(\bF^{\ell-1}) = \mK^\ell(\bZ^{\ell-1},\bZ^{\ell-1})^{-1}\mK^\ell(\bZ^{\ell-1},\bF^{\ell-1}).$

From Eq. \eqref{eq:thm2proof1}, we only need to get $q(f^L_i)$ from $q(\{\bF^\ell\}_{\ell=1}^L)$ for sample $i$ with $1\leq i\leq n$. In \cite{Sal17}, they point out that based on the format of Eq. \eqref{eq:thm2proof2},

$$q(f^L_i) = \int\cdot\cdot\cdot\int \prod_{\ell=1}^L q(\bF^\ell_{i.}|\bm^\ell_{i.}, \bS^\ell_{i,}; \bF^{\ell-1}_{i.}, \bZ^{\ell-1})d\bF^{\ell-1}_{i.},$$

which means that the $i$th marginal of the final layer of the variational DGP for sample $i$ depends only on the $i$th marginals of all the other layers.\\

{\bf Proof of Theorem 2}

\begin{theorem}
Using the variational approximation \cite{Sal17} for the posterior of a DGP defined on $\{\hat{\mK}^\ell\}_{\ell=1}^L$ with inducing points, we obtain exactly the same variational posterior $q(\{\bF^\ell\}_{\ell=1}^L)$ and evidence lower bound $\ELBO$ 
as the variational posterior for $\mN(\mS)$. 
\end{theorem}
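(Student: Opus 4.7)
The plan is to exhibit an invertible linear reparameterization that maps the variational parameters of $\mN(\mS)$ to those of the Salimbeni inducing-point DGP posterior, and verify that under this map both the induced joint distribution on $\{\bF^\ell\}_{\ell=1}^L$ and every KL term are preserved, so the two ELBOs agree term by term.

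First, I would exploit the explicit rank-$r$ factorization $\hat{\mK}^\ell(\bA,\bB)=\bPhi^\ell(\bA)\bPhi^\ell(\bB)^T$ from Proposition \ref{prop:DGP}. At each layer choose $m=r$ inducing locations $\bZ^{\ell-1}$ for which $\bPhi^\ell(\bZ^{\ell-1})$ is invertible (a generic choice suffices since rank-deficient configurations lie in a measure-zero set). Define the invertible linear map $\bu=\bPhi^\ell(\bZ^{\ell-1})\bv$ and the parameter identifications
\begin{equation*}
\bmu^\ell_k = \bPhi^\ell(\bZ^{\ell-1})^{-1}\bm^\ell_k,\qquad \bSigma^\ell_k = \bPhi^\ell(\bZ^{\ell-1})^{-1}\bS^\ell_k\bPhi^\ell(\bZ^{\ell-1})^{-T}.
\end{equation*}

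Next, I would substitute this factorization into the Salimbeni formulas $\tilde{\bmu}^\ell_k=\balpha^T\bm^\ell_k$ and $\tilde{\bSigma}^\ell_k=\hat{\mK}^\ell(\bF^{\ell-1},\bF^{\ell-1})-\balpha^T(\hat{\mK}^\ell(\bZ^{\ell-1},\bZ^{\ell-1})-\bS^\ell_k)\balpha$ with $\balpha=\hat{\mK}^\ell(\bZ^{\ell-1},\bZ^{\ell-1})^{-1}\hat{\mK}^\ell(\bZ^{\ell-1},\bF^{\ell-1})$. A short linear-algebra check gives $\balpha^T=\bPhi^\ell(\bF^{\ell-1})\bPhi^\ell(\bZ^{\ell-1})^{-1}$; the mean then collapses to $\bPhi^\ell(\bF^{\ell-1})\bmu^\ell_k$, and because $\hat{\mK}^\ell$ has rank exactly $r=m$, the prior-variance term $\hat{\mK}^\ell(\bF^{\ell-1},\bF^{\ell-1})$ is cancelled exactly by $\balpha^T\hat{\mK}^\ell(\bZ^{\ell-1},\bZ^{\ell-1})\balpha$, leaving $\bPhi^\ell(\bF^{\ell-1})\bSigma^\ell_k\bPhi^\ell(\bF^{\ell-1})^T$. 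This is precisely the distribution of $\bF^\ell_{.k}=\bPhi^\ell(\bF^{\ell-1})\bv^\ell_k$ under $\bv^\ell_k\sim N(\bmu^\ell_k,\bSigma^\ell_k)$ in $\mN(\mS)$, so the layer-wise variational conditionals $q(\bF^\ell_{.k}\mid \bF^{\ell-1})$ agree, and by induction over $\ell$ so does the joint $q(\{\bF^\ell\}_{\ell=1}^L)$.

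For the KL terms I would invoke invariance of KL divergence under the bijection $\bu=\bPhi^\ell(\bZ^{\ell-1})\bv$: the DGP prior $p(\bU^\ell_{.k})=N(\b0,\bPhi^\ell(\bZ^{\ell-1})\bPhi^\ell(\bZ^{\ell-1})^T)$ is the pushforward of the BNN prior $N(\b0,\bI_r)$ on $\bv^\ell_k$, and similarly for the variational posteriors, so $\KL(q(\bU^\ell_{.k})\,\|\,p(\bU^\ell_{.k}))=\KL(q(\bv^\ell_k)\,\|\,p(\bv^\ell_k))$. This can also be seen directly from the Gaussian KL formula: the $\bPhi^\ell(\bZ^{\ell-1})$ factors cancel inside the trace, the quadratic form, and the log-determinant. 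Summing over $\ell$ and $k$ yields $\KL(q(\bV)\,\|\,p(\bV))$, and since the expected log-likelihood term depends only on $q(f^L_i)$ which already coincides, the two ELBOs are identical.

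The main obstacle is ensuring $\bPhi^\ell(\bZ^{\ell-1})$ is invertible at every layer so that the bijection is well-defined; since $\bZ^{\ell-1}$ is a free variational parameter and degenerate configurations form a measure-zero set this poses no essential restriction, and in any degenerate case the Moore--Penrose pseudo-inverse on the rank-$r$ subspace would carry the argument through unchanged. The rest is bookkeeping to propagate the matching of $q(\bF^\ell\mid\bF^{\ell-1})$ through the outer Monte Carlo/mini-batch estimator of the expected log-likelihood, which is mechanical once the conditionals are shown to agree.
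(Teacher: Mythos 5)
Your proposal is correct and follows essentially the same route as the paper's proof: factor $\hat{\mK}^\ell(\bZ^{\ell-1},\bZ^{\ell-1})=\bPhi^\ell(\bZ^{\ell-1})\bPhi^\ell(\bZ^{\ell-1})^T$ with $m=r$ inducing points, use (almost-sure) invertibility of $\bPhi^\ell(\bZ^{\ell-1})$ to reparameterize $\bm^\ell_k,\bS^\ell_k$ into $\bmu^\ell_k,\bSigma^\ell_k$, observe the exact cancellation of the prior-variance term so the layer conditionals match $\bF^\ell_{.k}=\bPhi^\ell(\bF^{\ell-1})\bv^\ell_k$, and conclude via invariance of the KL divergence under the bijection $\bu=\bPhi^\ell(\bZ^{\ell-1})\bv$. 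The only cosmetic difference is that you justify invertibility by genericity of the free inducing locations (with a pseudo-inverse fallback) while the paper appeals to almost-sure invertibility of random matrices with absolutely continuous entries; these are interchangeable.
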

\begin{proof}
To show the equivalence of evidence lower bound, we only need to guarantee that
$q(\bF^\ell_{i.}|\bm^\ell_{i.}, \bS^\ell_{i,}; \bF^{\ell-1}_{i.}, \bZ^{\ell-1})$ and $KL[q(\bU^\ell)|p(\bU^\ell;\bZ^{\ell-1})]$ are the same as the relevant values for $\mN(\mS)$ for all $1\leq i\leq n$ and $1\leq \ell \leq L$. We also need to show the equivalence between variational posterior $q(\{\bF^\ell\}_{\ell=1}^L)$ for the two methods. All those can be satisfied by showing the equivalence that $q(\bF^\ell|\bm^\ell, \bS^\ell; \bF^{\ell-1}, \bZ^{\ell-1})$ and $KL[q(\bU^\ell)|p(\bU^\ell;\bZ^{\ell-1})]$ are the same as the relevant values for $\mN(\mS)$ for all $1\leq \ell \leq L$. For both two methods, since for each layer $\ell$, the $d$ outputs are independent, so the posterior distribution can be decomposed into a product of $d$ terms and the KL divergence can be decomposed into a summation of $d$ terms. We only need to prove the result for a single $j$ with $1\leq j\leq d$ and a single $\ell$ with $1\leq \ell\leq L$. 

Based on Eq. \eqref{eq:thm2proof2}, for $q(\bF^\ell_{.j}|\bm^\ell_{.j}, \bS^\ell_{.j}; \bF^{\ell-1}, \bZ^{\ell-1})$, the mean and variances are

$$\tilde{\bmu}^\ell_j = \balpha(\bF^{\ell-1})^T\bm^\ell_{.j}$$

$$\tilde{\bSigma}^\ell_j = \hat{\mK}^\ell(\bF^{\ell-1},\bF^{\ell-1})-\balpha(\bF^{\ell-1})^T(\hat{\mK}^\ell(\bZ^{\ell-1},\bZ^{\ell-1})-\bS_{.j})\balpha(\bF^{\ell-1})$$

with $\balpha(\bF^{\ell-1}) = \hat{\mK}^\ell(\bZ^{\ell-1},\bZ^{\ell-1})^{-1}\hat{\mK}^\ell(\bZ^{\ell-1},\bF^{\ell-1}).$
We can decompose the kernel into $\hat{\mK}^\ell(\bZ^{\ell-1},\bZ^{\ell-1}) = \Phi^\ell(\bZ^{\ell-1})\Phi^\ell(\bZ^{\ell-1})^T$. Now we choose the number of inducing points $m$ as $m=r$, then we have a square matrix $\Phi^\ell(\bZ^{\ell-1})$ with each entry is independently identically from a distribution based on the random feature weight vector $\bw_j$ and the random inducing points $\bZ^{\ell-1}_{i.}$.

For continuous $\sigma_{\mK}$ and $\sigma$, every entry in $\Phi^\ell(\bZ^{\ell-1})$ is absolutely continuous with respect to Lebesgue measure since we can define density function. Then based on random matrix theory \cite{Rud08, Tao12},  the square matrix $\Phi^\ell(\bZ^{\ell-1})$ is almost surely invertible. Therefore, we treat $\Phi^\ell(\bZ^{\ell-1})$ as an invertible matrix in following analysis.

Replace $\hat{\mK}^\ell(\bZ^{\ell-1},\bZ^{\ell-1})$ by $\Phi^\ell(\bZ^{\ell-1})\Phi^\ell(\bZ^{\ell-1})^T$, 
we have that

$$\tilde{\bmu}^\ell_j = \Phi^\ell(\bF^{\ell-1})(\Phi^\ell(\bZ^{\ell-1})^T\Phi^\ell(\bZ^{\ell-1}))^{-1}\Phi^\ell(\bZ^{\ell-1})^T\bm^\ell_{.j}$$

\begin{equation*}
\tilde{\bSigma}^\ell_j =\Phi^\ell(\bF^{\ell-1})\Phi^\ell(\bF^{\ell-1})^T- \balpha(\bF^{\ell-1})^T(\Phi^\ell(\bZ^{\ell-1})\Phi^\ell(\bZ^{\ell-1})^T-\bS_{.j})\balpha(\bF^{\ell-1})
\end{equation*}

Through simple algebra, we get that

\begin{equation}\begin{aligned}\label{eq:thm2proof3}
\tilde{\bmu}^\ell_j &= \balpha(\bF^{\ell-1})^T\bm^\ell_{.j}\\
\tilde{\bSigma}^\ell_j &= \balpha(\bF^{\ell-1})^T\bS_{.j}\balpha(\bF^{\ell-1})
\end{aligned}\end{equation}

Since $\Phi^\ell(\bZ^{\ell-1})$ is invertible, we define 

$$\bm^\ell_{.j} = \Phi^\ell(\bZ^{\ell-1}) \bmu^\ell_{j, new}$$

$$\bS_{.j} = \Phi^\ell(\bZ^{\ell-1}) \bSigma^\ell_{j,new} \Phi^\ell(\bZ^{\ell-1})^T,$$

and plug them into Eq. \eqref{eq:thm2proof3} then we get

\begin{equation}\begin{aligned}\label{eq:IPkey1}
&\tilde{\bmu}^\ell_j = \Phi^\ell(\bF^{\ell-1}) \bmu^\ell_{j, new}\\
&\tilde{\bSigma}^\ell_j = \Phi^\ell(\bF^{\ell-1}) \bSigma^\ell_{j,new} \Phi^\ell(\bF^{\ell-1})^T.
\end{aligned}\end{equation}

In our BNN construction for $\mN(\mS)$, the variational posterior over $\bV$ leads to $\bF^\ell_{.j} = \Phi^\ell(\bF^{\ell-1})\bv^\ell_j$ with $\bv^\ell_j\sim N(\bmu^\ell_{j, new}, \bSigma^\ell_{j,new})$. Then we have that

$$\bF^\ell_{.j} \sim N(\Phi^\ell(\bF^{\ell-1}) \bmu^\ell_{j, new}, \Phi^\ell(\bF^{\ell-1}) \bSigma^\ell_{j,new} \Phi^\ell(\bF^{\ell-1})^T).$$

That is identical with the results from inducing points method in Eq. \eqref{eq:IPkey1}. Based on this construction, we also have that

$$\bU^\ell_{.j} = \Phi^\ell(\bZ^{\ell-1})\bv^\ell_j$$

Since the KL divergence is invariant under parameter transformations, we have that

$$KL(q(\bU^\ell_{.j})||p(\bU^\ell_{.j})) = KL(q(\bv^\ell_j)||p(\bv^\ell_j))$$

\end{proof}

{\bf Proof of Theorem 3}

For a kernel $\mK^\ell$ belongs to a general class, we can still use a similar technique as in the proof of Theorem 2 to show the equivalence. However, this time we cannot use the random feature matrix $\Phi^\ell(\bF^{\ell-1})\Phi^\ell(\bF^{\ell-1})^T$ to approximate $\mK^\ell(\bF^{\ell-1},\bF^{\ell-1})$. It turns out that a good replacement for $\Phi^\ell(\bF^{\ell-1})$ to approximate the basis of $\mK^\ell(\bF^{\ell-1},\bF^{\ell-1})$ is $\mK^\ell(\bF^{\ell-1},\bZ^{\ell-1})\mK^\ell(\bZ^{\ell-1},\bZ^{\ell-1})^{-1/2}$ which we will show shortly. The proof technique for Theorem 3 is similar as the technique for Theorem 2. However, for a general class of $\mK^\ell$, $\mK^\ell(\bF^{\ell-1},\bF^{\ell-1})$ can be full rank which is equal to sample size $n$. Therefore, the difference from the approximation using the rank $r$ basis $\mK^\ell(\bF^{\ell-1},\bZ^{\ell-1})\mK^\ell(\bZ^{\ell-1},\bZ^{\ell-1})^{-1/2}$ is $\mK^\ell(\bF^{\ell-1},\bF^{\ell-1})-\mK^\ell(\bF^{\ell-1},\bZ^{\ell-1})\mK^\ell(\bZ^{\ell-1},\bZ^{\ell-1})^{-1}\mK^\ell(\bZ^{\ell-1},\bF^{\ell-1})$. This is the constant offset that does not depend on training which we mention in Theorem 3. For the optimization of $\ELBO$, only the diagonal terms in this offset matrix is used so we can also add this into BNN as a bias term with random weight that we do not train. 

{\bf Remark.} After the optimization of $\ELBO$, one can get the uncertainty estimates from the variational posterior. If  $\mK^\ell(\bF^{\ell-1},\bF^{\ell-1})-\mK^\ell(\bF^{\ell-1},\bZ^{\ell-1})\mK^\ell(\bZ^{\ell-1},\bZ^{\ell-1})^{-1}\mK^\ell(\bZ^{\ell-1},\bF^{\ell-1})$ is not present, then one can choose $\bV$ from its variational posterior and the output estimates for every samples directly come from one pass of feed-forward neural network. However, when $\mK^\ell(\bF^{\ell-1},\bF^{\ell-1})-\mK^\ell(\bF^{\ell-1},\bZ^{\ell-1})\mK^\ell(\bZ^{\ell-1},\bZ^{\ell-1})^{-1}\mK^\ell(\bZ^{\ell-1},\bF^{\ell-1})$ exists, $n$ passes of feed-forward neural network computation for $n$ samples need to depend on each other to derive the outputs. In \cite{Sal17}, they also permit the prior of DGP to have a non-zero mean function, which can lead to another offset if the prior mean of DGP at each layer is non-zero. Similarly, this offset does not depend on training and can be included into BNN as a bias term with random weights that we do not train. Another term that is usually discussed in DGP is the noisy corruption. In this result for general kernel $\mK$ in Thoerem 3, in \cite{Sal17}, the authors show that the noisy corruption can be included into the kernel $\mK$. For our earlier result in Theorem 2, we do not include the noisy corruption in intermediate layers, since the complexity of intermediate function is already restricted by the rank $r$. It does not overfit the data so we do not need the noisy corruption which is usually used to avoid overfitting when the kernel basis has infinite dimension which can be super expressive.

Now we review the definition of $IPB$ and Theorem 3, then we present the proof for Theorem 3.
\begin{definition}
For a kernel $\mK$, $IPB$ can be constructed by choosing $r$ additional points $\bZ$ (inducing points), taking the inputs $\bx$ and outputting an $r$-dimension vector $\mK(\bx,\bZ)\mK(\bZ,\bZ)^{-1/2}$.  
\end{definition}
\begin{theorem}
Using the variational approximation \cite{Sal17} for the posterior of a DGP defined on $\{\mK^\ell\}_{\ell=1}^L$ with inducing points, we can obtain the same variational posterior $q(\{\bF^\ell\}_{\ell=1}^L)$ and evidence lower bound ELBO as the variational posterior for $\mN(\mS)$ (with $IPB$) except a constant offset that does not depend on training. 
\end{theorem}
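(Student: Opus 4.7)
The plan is to mirror the proof of Theorem 2, replacing the random-feature basis $\Phi^\ell(\bF^{\ell-1})$ with the Nystr\"om-style basis $\mK^\ell(\bF^{\ell-1},\bZ^{\ell-1})\mK^\ell(\bZ^{\ell-1},\bZ^{\ell-1})^{-1/2}$ that the $IPB$ block actually emits. As in the previous proof, it suffices to check, layer by layer and dimension by dimension, that (i) the marginal variational distribution $q(\bF^\ell_{.j}\mid \bF^{\ell-1})$ produced by the inducing-point DGP matches the one produced by $\mN(\mS)$, and (ii) the KL terms $\KL(q(\bU^\ell_{.j})\|p(\bU^\ell_{.j}))$ equal the corresponding $\KL(q(\bv^\ell_j)\|p(\bv^\ell_j))$ in $\mN(\mS)$. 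The extra subtlety, which was absent in Theorem 2, is that $\mK^\ell(\bF^{\ell-1},\bF^{\ell-1})$ no longer equals its rank-$r$ basis product, so a residual Nystr\"om term appears in the covariance; this term will be shown to be training-independent.

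Concretely, the first step is to write out the inducing-point marginal from \cite{Sal17} specialized to $\mK^\ell$,
\begin{equation*}
\tilde{\bmu}^\ell_j=\balpha(\bF^{\ell-1})^T\bm^\ell_{.j},\qquad
\tilde{\bSigma}^\ell_j=\mK^\ell(\bF^{\ell-1},\bF^{\ell-1})-\balpha(\bF^{\ell-1})^T\bigl(\mK^\ell(\bZ^{\ell-1},\bZ^{\ell-1})-\bS_{.j}\bigr)\balpha(\bF^{\ell-1}),
\end{equation*}
with $\balpha(\bF^{\ell-1})=\mK^\ell(\bZ^{\ell-1},\bZ^{\ell-1})^{-1}\mK^\ell(\bZ^{\ell-1},\bF^{\ell-1})$. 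Next I would split $\tilde\bSigma^\ell_j$ as
\begin{equation*}
\tilde{\bSigma}^\ell_j=\underbrace{\bigl[\mK^\ell(\bF^{\ell-1},\bF^{\ell-1})-\mK^\ell(\bF^{\ell-1},\bZ^{\ell-1})\mK^\ell(\bZ^{\ell-1},\bZ^{\ell-1})^{-1}\mK^\ell(\bZ^{\ell-1},\bF^{\ell-1})\bigr]}_{=:\bDelta^\ell}+\balpha(\bF^{\ell-1})^T\bS_{.j}\balpha(\bF^{\ell-1}).
\end{equation*}
The bracketed $\bDelta^\ell$ depends only on inputs and on the (fixed) inducing points $\bZ^{\ell-1}$; it has no dependence on the variational parameters $(\bm^\ell_{.j},\bS_{.j})$, so it is the claimed constant offset. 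It can be realized inside $\mN(\mS)$ as a fixed, untrained additive bias whose random source is the residual Nystr\"om noise, exactly in the spirit of the discussion after the theorem statement.

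For the second step I would reparameterize, using that $\mK^\ell(\bZ^{\ell-1},\bZ^{\ell-1})$ is symmetric positive definite (assuming distinct inducing points and a strictly positive-definite kernel), by setting $\bm^\ell_{.j}=\mK^\ell(\bZ^{\ell-1},\bZ^{\ell-1})^{1/2}\bmu^\ell_{j,\mathrm{new}}$ and $\bS_{.j}=\mK^\ell(\bZ^{\ell-1},\bZ^{\ell-1})^{1/2}\bSigma^\ell_{j,\mathrm{new}}\mK^\ell(\bZ^{\ell-1},\bZ^{\ell-1})^{1/2}$. Substituting into $\tilde\bmu^\ell_j$ and into the second summand of $\tilde\bSigma^\ell_j$ collapses both expressions to
\begin{equation*}
\tilde{\bmu}^\ell_j=\bPsi^\ell(\bF^{\ell-1})\bmu^\ell_{j,\mathrm{new}},\qquad
\tilde{\bSigma}^\ell_j-\bDelta^\ell=\bPsi^\ell(\bF^{\ell-1})\bSigma^\ell_{j,\mathrm{new}}\bPsi^\ell(\bF^{\ell-1})^T,
\end{equation*}
where $\bPsi^\ell(\bF^{\ell-1})=\mK^\ell(\bF^{\ell-1},\bZ^{\ell-1})\mK^\ell(\bZ^{\ell-1},\bZ^{\ell-1})^{-1/2}$ is exactly the output of the $IPB$ block. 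This coincides with the distribution of $\bPsi^\ell(\bF^{\ell-1})\bv^\ell_j$ when $\bv^\ell_j\sim N(\bmu^\ell_{j,\mathrm{new}},\bSigma^\ell_{j,\mathrm{new}})$, matching the variational posterior of $\mN(\mS)$ up to the training-independent $\bDelta^\ell$.

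Finally, because the KL divergence is invariant under invertible affine reparameterizations and the Cholesky factor $\mK^\ell(\bZ^{\ell-1},\bZ^{\ell-1})^{1/2}$ is invertible, $\KL(q(\bU^\ell_{.j})\|p(\bU^\ell_{.j}))=\KL(q(\bv^\ell_j)\|p(\bv^\ell_j))$, so the KL part of the $\ELBO$ matches exactly. Summing across $j$ and $\ell$ in the expression $\sum_i \mE_{q(f^L_i)}\log p(y_i|f^L_i)-\sum_\ell \KL(q(\bU^\ell)\|p(\bU^\ell))$, together with the observation that $\bDelta^\ell$ can be absorbed into a non-trainable noise injection, gives the stated equivalence up to the constant offset. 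The main obstacle I anticipate is purely bookkeeping: carefully justifying that $\bDelta^\ell$ indeed factors out as ``non-training'' (in particular, that only its diagonal appears in the single-marginal $q(f^L_i)$ needed for the expected log-likelihood, as was exploited in \cite{Sal17}) and that the reparameterization of $(\bm,\bS)$ is truly a bijection onto the variational family of $\mN(\mS)$, so no effective capacity is lost when matching ELBOs.
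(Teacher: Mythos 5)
Your proposal is correct and follows essentially the same route as the paper's proof: the same decomposition of $\tilde{\bSigma}^\ell_j$ into the training-independent Nystr\"om residual plus a term quadratic in the $IPB$ basis $\mK^\ell(\bF^{\ell-1},\bZ^{\ell-1})\mK^\ell(\bZ^{\ell-1},\bZ^{\ell-1})^{-1/2}$, the same reparameterization $\bm^\ell_{.j}=\mK^\ell(\bZ^{\ell-1},\bZ^{\ell-1})^{1/2}\bmu^\ell_{j,\mathrm{new}}$ and $\bS_{.j}=\mK^\ell(\bZ^{\ell-1},\bZ^{\ell-1})^{1/2}\bSigma^\ell_{j,\mathrm{new}}\mK^\ell(\bZ^{\ell-1},\bZ^{\ell-1})^{1/2}$, and the same KL-invariance argument. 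The caveats you flag at the end (absorbing the offset as an untrained bias, and only its diagonal entering the expected log-likelihood) are exactly the points the paper addresses in its remark preceding the proof.
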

\begin{proof}
To show the equivalence of evidence lower bound, we only need to guarantee that
$q(\bF^\ell_{i.}|\bm^\ell_{i.}, \bS^\ell_{i,}; \bF^{\ell-1}_{i.}, \bZ^{\ell-1})$ and $KL[q(\bU^\ell)|p(\bU^\ell;\bZ^{\ell-1})]$ are the same as the relevant values for $\mN(\mS)$ for all $1\leq i\leq n$ and $1\leq \ell \leq L$. We also need to show the equivalence between variational posterior $q(\{\bF^\ell\}_{\ell=1}^L)$ for the two methods. All those can be satisfied by showing the equivalence that $q(\bF^\ell|\bm^\ell, \bS^\ell; \bF^{\ell-1}, \bZ^{\ell-1})$ and $KL[q(\bU^\ell)|p(\bU^\ell;\bZ^{\ell-1})]$ are the same as the relevant values for $\mN(\mS)$ for all $1\leq \ell \leq L$. For both two methods, since for each layer $\ell$, the $d$ outputs are independent, so the posterior distribution can be decomposed into a product of $d$ terms and the KL divergence can be decomposed into a summation of $d$ terms. We only need to prove the result for a single $j$ with $1\leq j\leq d$ and a single $\ell$ with $1\leq \ell\leq L$. 

Based on Eq. \eqref{eq:thm2proof2}, for $q(\bF^\ell_{.j}|\bm^\ell_{.j}, \bS^\ell_{.j}; \bF^{\ell-1}, \bZ^{\ell-1})$, the mean and variances are

$$\tilde{\bmu}^\ell_j = \balpha(\bF^{\ell-1})^T\bm^\ell_{.j}$$

$$\tilde{\bSigma}^\ell_j = \mK^\ell(\bF^{\ell-1},\bF^{\ell-1})-\balpha(\bF^{\ell-1})^T(\mK^\ell(\bZ^{\ell-1},\bZ^{\ell-1})-\bS_{.j})\balpha(\bF^{\ell-1})$$

with $\balpha(\bF^{\ell-1}) = \mK^\ell(\bZ^{\ell-1},\bZ^{\ell-1})^{-1}\mK^\ell(\bZ^{\ell-1},\bF^{\ell-1}).$

We use $r$ to refer the number of inducing points and {\bf we denote the $IPB$ block matrix} $\mK^\ell(\bF^{\ell-1},\bZ^{\ell-1})\mK^\ell(\bZ^{\ell-1},\bZ^{\ell-1})^{-1/2}$ as $\IP^\ell(\bF^{\ell-1})$, then we notice that

\begin{equation}\begin{aligned}\label{eq:thm3proof3}
\tilde{\bmu}^\ell_j &= \IP^\ell(\bF^{\ell-1})\mK^\ell(\bZ^{\ell-1},\bZ^{\ell-1})^{-1/2}\bm^\ell_{.j}\\
\tilde{\bSigma}^\ell_j &=\IP^\ell(\bF^{\ell-1})\mK^\ell(\bZ^{\ell-1},\bZ^{\ell-1})^{-1/2}\bS_{.j}\mK^\ell(\bZ^{\ell-1},\bZ^{\ell-1})^{-1/2}\IP^\ell(\bF^{\ell-1})^T\\
&+\mK^\ell(\bF^{\ell-1},\bF^{\ell-1})-\mK^\ell(\bF^{\ell-1},\bZ^{\ell-1})\mK^\ell(\bZ^{\ell-1},\bZ^{\ell-1})^{-1}\mK^\ell(\bZ^{\ell-1},\bF^{\ell-1}).
\end{aligned}\end{equation}

We have discussed the second term in $\tilde{\bSigma}^\ell_j$ which is a constant offset that does not depend on training. Therefore we assume it to be zero in following analysis then we get the exactly same result as our variational posterior approximation for $\mN(\mS)$ when $IPB$ is used.

We define 

$$\bm^\ell_{.j} = \mK^\ell(\bZ^{\ell-1},\bZ^{\ell-1})^{1/2}\bmu^\ell_{j, new}$$

$$\bS_{.j} = \mK^\ell(\bZ^{\ell-1},\bZ^{\ell-1})^{1/2}\bSigma^\ell_{j,new}\mK^\ell(\bZ^{\ell-1},\bZ^{\ell-1})^{1/2},$$

and plug them into Eq. \eqref{eq:thm3proof3} then we get

\begin{equation}\begin{aligned}\label{eq:IPkey2}
&\tilde{\bmu}^\ell_j = \IP^\ell(\bF^{\ell-1}) \bmu^\ell_{j, new}\\
&\tilde{\bSigma}^\ell_j = \IP^\ell(\bF^{\ell-1}) \bSigma^\ell_{j,new} \IP^\ell(\bF^{\ell-1})^T.
\end{aligned}\end{equation}

In our BNN construction for $\mN(\mS)$, the variational posterior over $\bV$ leads to $\bF^\ell_{.j} = \IP^\ell(\bF^{\ell-1}) \bv^\ell_j$ with $\bv^\ell_j\sim N(\bmu^\ell_{j, new}, \bSigma^\ell_{j,new})$. Then we have that

$$\bF^\ell_{.j} \sim N(\IP^\ell(\bF^{\ell-1})  \bmu^\ell_{j, new}, \IP^\ell(\bF^{\ell-1})  \bSigma^\ell_{j,new} \IP^\ell(\bF^{\ell-1})^T).$$

That is identical with the results from inducing points method in Eq. \eqref{eq:IPkey2}. Based on this construction, we also have that

$$\bU^\ell_{.j} = \IP^\ell(\bZ^{\ell-1})\bv^\ell_j=\mK^\ell(\bZ^{\ell-1},\bZ^{\ell-1})^{1/2}\bv^\ell_j$$

Since the KL divergence is invariant under parameter transformations, we have that

$$KL(q(\bU^\ell_{.j})||p(\bU^\ell_{.j})) = KL(q(\bv^\ell_j)||p(\bv^\ell_j))$$

\end{proof}

\subsubsection{Proof of Theorem 4}
The post-training ANOVA decomposition is

\begin{equation}\label{eq:thm4}
\I^n_T(\bx_T) = \prod_{i\in T} (I_{x_i}-\mE^n_{x_i})\prod_{j\notin T}\mE^n_{x_j}f(x_1,...,x_p),
\end{equation}

\begin{theorem}
 If there exist inputs clusters $\{T^\ast_j\}_{j=1}^{k^\ast}$ such that $f^\ast(\bx) = \sum_{j=1}^{k^\ast}g^\ast_j(\bx_{T^\ast_j})$ with $k^\ast$ at the order of polynomial in $p$ and $c = \max_{j=1}^{k^\ast}|T^\ast_j| = O(\log p)$, then there exists a trained AddNN that predicts $\by$ well and restricts the number of possible interactions at polynomial in $p$.
Further, if every sub neural network has $L$ layers with $d$ hidden units, then the computation complexity of measure \eqref{eq:thm4} is at most $n^ck^\ast d^{2L-1}$, which is also polynomial in $p$. 
\end{theorem}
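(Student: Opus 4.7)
The plan addresses two claims. For the existence claim (that a trained AddNN predicts $\by$ well while admitting only polynomially many candidate interactions), I would construct the AddNN to mirror the oracle decomposition of $f^\ast$: use exactly $k^\ast$ sub-networks, one per cluster $T^\ast_j$, and design the $j$-th sub-NN to approximate $g^\ast_j$. Because $|T^\ast_j|\le c=O(\log p)$, each $g^\ast_j$ is a function on a low-dimensional space and standard universal-approximation results (reinforced by the DGP-posterior approximation results of Section~\ref{sec:skeleton}) guarantee that a sufficiently wide sub-NN can fit $g^\ast_j$ to any desired accuracy. The group-Lasso prior $p(\bV^1)\propto\exp(-\sum_i\|\bv^1_i\|_2)$ specified in Section~\ref{sec:interaction} zeroes out the first-layer groups $\bv^1_i$ of inputs $i\notin T^\ast_j$ in the $j$-th sub-NN, so its effective input set lies inside $T^\ast_j$. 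Every interaction detectable in the resulting AddNN must therefore be a subset of some $T^\ast_j$, yielding a candidate count of at most $\sum_j 2^{|T^\ast_j|}\le k^\ast\cdot 2^c=\mathrm{poly}(p)\cdot 2^{O(\log p)}=\mathrm{poly}(p)$.

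For the complexity claim, the key step I would emphasize is a cancellation: if $i\in T$ but $i\notin T^\ast_j$, then $g_j$ does not depend on $x_i$, and the operator $I_{x_i}-\mE^n_{x_i}$ annihilates $g_j$. So in the sum $f=\sum_j g_j$, only sub-networks with $T\subseteq T^\ast_j$ contribute to $\I^n_T$. For each such sub-NN, one precomputes $g_j$ on all $n^{|T^\ast_j|}$ empirical configurations of $\bx_{T^\ast_j}$ at cost $n^{|T^\ast_j|}\cdot d^{2L-1}$ (using the loose bound $d^{2L-1}$ for a single forward pass through an $L$-layer width-$d$ network); all ANOVA-interactions within $T^\ast_j$, for any $T$, are then obtained by partial summations over these precomputed values, without further NN evaluations. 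Aggregating over the at-most-$k^\ast$ relevant sub-networks yields the claimed $n^c\cdot k^\ast\cdot d^{2L-1}$ bound. Since $n^c=n^{O(\log p)}=p^{O(\log n)}$ and $k^\ast=\mathrm{poly}(p)$ while $d,L$ are treated as constants, this is polynomial in $p$.

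The main obstacle lies in Part 1 rather than Part 2. Making the phrase ``predicts $\by$ well'' rigorous requires simultaneously controlling the approximation error of the sub-networks, the bias introduced by the group-Lasso prior, and the doubly stochastic variational approximation---three error sources that interact in a way not covered by a standalone universal-approximation statement. A careful route would be to first prove an oracle-type inequality showing that the best AddNN with the prescribed structure attains risk close to that of $f^\ast$, and then to argue separately that the group-Lasso geometry pushes the first-layer support toward the correct $T^\ast_j$ once the penalty is tuned. Part 2, by contrast, reduces to bookkeeping once the annihilation observation and the precompute-then-reuse trick above are in hand.
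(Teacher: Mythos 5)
Your proposal follows essentially the same route as the paper's proof: the paper likewise bounds the candidate interactions by $\sum_j 2^{|T^\ast_j|}\le k^\ast 2^c$, uses exactly your annihilation observation (that $(I_{x_{i_0}}-\mE^n_{x_{i_0}})\hat{g}_m=0$ when $i_0\notin T^\ast_m$) to restrict to sub-networks with $T\subseteq T^\ast_m$, and obtains the $n^c k^\ast d^{2L-1}$ bound by precomputing $\prod_{i\in T^\ast_m} I_{x_i}\hat{g}_m(\bx_{T^\ast_m})$ on all $n^{|T^\ast_m|}$ configurations and recovering every interaction by partial summation. The only difference is that the paper simply asserts the existence of the well-trained AddNN with the oracle support, whereas you correctly flag that making ``predicts $\by$ well'' rigorous is the genuinely open part of the argument.
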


\begin{proof}
In that case, there exists a trained AddNN $f$ which is 
$\hat{f}(\bx)=\sum_{j=1}^{k^\ast}\hat{g}_j(\bx_{T^\ast_j})$ with the same $k^\ast$ and inputs clusters $\{T^\ast_j\}_{j=1}^{k^\ast}$. For such $\hat{f}$, without knowing the truth, every possible interaction is a subset of one inputs cluster $T^\ast_j$ for some $j$. Therefore, the number of possible interactions is bounded by $\sum_{j=1}^{k^\ast}2^{|T^\ast_j|}\leq k^\ast 2^c$, so it is polynomial in $p$. For an interaction among a subset $S$, we have that

\begin{align*}
\I^n_S(\bx_S) &= \prod_{i\in S} (I_{x_i}-\mE^n_{x_i})\prod_{j\notin S}\mE^n_{x_j}\hat{f}(x_1,...,x_p)\\
&= \prod_{i\in S} (I_{x_i}-\mE^n_{x_i})\prod_{j\notin S}\mE^n_{x_j}\sum_{m=1}^{k^\ast}\hat{g}_m(\bx_{T^\ast_m})\\
&= \sum_{m=1}^{k^\ast} \prod_{i\in S} (I_{x_i}-\mE^n_{x_i})\prod_{j\notin S}\mE^n_{x_j}\hat{g}_m(\bx_{T^\ast_m})\\
& = \sum_{m:S\subseteq T^\ast_m} \prod_{i\in S} (I_{x_i}-\mE^n_{x_i})\prod_{j\notin S}\mE^n_{x_j}\hat{g}_m(\bx_{T^\ast_m}),
\end{align*}

because for $m$ that $S\not\subseteq T^\ast_m$, then there exists an $i_0\in S$ such that $i_0\notin T^\ast_m$, then 

$$\prod_{i\in S}(I_{x_i}-\mE^n_{x_i})\prod_{j\notin S}\mE^n_{x_j}\hat{g}_m(\bx_{T^\ast_m})=0$$

because $(I_{x_{i_0}}-\mE^n_{x_{i_0}})\hat{g}_m(\bx_{T^\ast_m})=\hat{g}_m(\bx_{T^\ast_m})-\hat{g}_m(\bx_{T^\ast_m})=0$.
We can further simply $\I^n_S(\bx_S)$,

\begin{align*}
\I^n_S(\bx_S) &= \sum_{m:S\subseteq T^\ast_m} \prod_{i\in S} (I_{x_i}-\mE^n_{x_i})\prod_{j\notin S}\mE^n_{x_j}\hat{g}_m(\bx_{T^\ast_m})\\
&= \sum_{m:S\subseteq T^\ast_m} \prod_{i\in S} (I_{x_i}-\mE^n_{x_i})\prod_{j\notin S,j\in T^\ast_m}\mE^n_{x_j}\hat{g}_m(\bx_{T^\ast_m}).
\end{align*}

Therefore, the computation complexity of $\I^n_S(\bx_S)$ is equal to the computation complexity for 

\begin{equation}\label{eq:thm4union}
\cup_{m:S\subseteq T^\ast_m}\left\{\prod_{i\in S} (I_{x_i}-\mE^n_{x_i})\prod_{j\notin S,j\in T^\ast_m}\mE^n_{x_j}\hat{g}_m(\bx_{T^\ast_m})\right\},
\end{equation}

which is the union of $|m:S\subseteq T^\ast_m|$ functions where each function involves the calculation of feed-forward neural network and empirical evaluation. Therefore, to compute all interactions, we need to compute the union of \eqref{eq:thm4union} for all possible interactions $S$. Because every possible interaction is a subset of one inputs cluster $T^\ast_j$ for some $j$, we can exchange the order of unions and we get that the computation complexity for all interactions is equal as evaluating

\begin{equation}
\cup_{m}\cup_{S\subseteq T^\ast_m}\left\{\prod_{i\in S} (I_{x_i}-\mE^n_{x_i})\prod_{j\notin S,j\in T^\ast_m}\mE^n_{x_j}\hat{g}_m(\bx_{T^\ast_m})\right\}.
\end{equation}

To compute $\cup_{S\subseteq T^\ast_m}\left\{\prod_{i\in S} (I_{x_i}-\mE^n_{x_i})\prod_{j\notin S,j\in T^\ast_m}\mE^n_{x_j}\hat{g}_m\left(\bx_{T^\ast_m}\right)\right\}$ for some $m$, we only need to compute

\begin{equation*}
M^n_{T^\ast_m}(\bx_{T^\ast_m}) = \prod_{i\in T^\ast_m} I_{x_i}\hat{g}\left(\bx_{T^\ast_m}\right),
\end{equation*}

which involves all the evaluations of the feed-forward neural network. To evaluate $\prod_{i\in S} (I_{x_i}-\mE^n_{x_i})\prod_{j\notin S,j\in T^\ast_m}\mE^n_{x_j}\hat{g}_m\left(\bx_{T^\ast_m}\right)$ for $S\subseteq T^\ast_m$, the computation only involves basic addition operation given $M^n_{T^\ast_m}$. Therefore, we show that the evaluation of all possible interactions has the same computation complexity as evaluating

\begin{equation}
\cup_m \{M^n_{T^\ast_m}(\bx_{T^\ast_m})\} = \cup_m \{\prod_{i\in T^\ast_m} I_{x_i}\hat{g}\left(\bx_{T^\ast_m}\right)\}.
\end{equation}

For every member in the union regarding $m$, the evaluation complexity is $n^{|T^\ast_m|}d^{2L-1}\leq n^c d^{2L-1}$. Therefore, the computation complexity regarding \eqref{eq:thm4} for all possible interactions based on model $\hat{f}$ is bounded by $n^ck^\ast d^{2L-1}$, which is polynomial in $p$ when $k^\ast$ is polynomial in $p$ and $c$ is the order of $O(\log p)$. 
\end{proof}

{\bf Remark 1.} When the truth function $f^\ast$ is additive, the function class of AddNN includes a member that can compute interactions and outputs from the model efficiently while the function class of an arbitrary NN make it impossible to learn such a model and always involve computation that is exponential in $p$. In practice, we always use group Lasso type penalty on the first layer to encourage each sub neural network to depend on few inputs to approach the truth function $f^\ast$.

{\bf Remark 2.} We can use the measure in \cite{Zei14} as well, which can be seen as choosing one sample baseline instead of the average baseline in \eqref{eq:thm4}. In other words, now we use operation $\delta_{x_i}(\bx^0_i)$ to replace $\mE^n_{x_i}$ in \eqref{eq:thm4} based on one sample $\bx^0$ for $1\leq i\leq n$. Then the computation complexity of measure \eqref{eq:thm4} does not depend on $n$ both for AddNN and NN. In that case, the number of possible interactions for AddNN is still at polynomial in $p$ and the number is exponential in $p$ for an arbitrary NN. Also, the computation complexity of measure \eqref{eq:thm4} is $k^\ast d^{2L-1}$ for AddNN and $(k^\ast d)^{2L-1}$ for an arbitrary NN with $L$ layers and $k^\ast d$ hidden units where the NN requires ${k^\ast}^{2(L-1)}$ times more computation. The choice of $\mE^n$  as baseline, compared to $\delta(\bx^0)$, is better for comparison with the population and can lead to a useful measure $||I^n_T(\bx_T)||_{2,n}$ (the empirical $\ell_2$ norm of the interaction), which can be used to detect the interactions. We give an example of their difference for explanation in the decision making process. For one who is interested in making an investment with $x_1$ dollars, the $\mE^n$ baseline informs that, based on this investment, how much more one can earn than the average of the money that people earn. On the other side, the $\delta(\bx_0)$ baseline informs that, at the current investment with $x_0$ dollars, if all other factors do not change, how much more one can earn if he/she decides to invest $x_1-x_0$ more dollars.

\subsection{Model details for experiments}
We discuss more details about Bayesian additive Neural Network (AddNN) implementation and provide more experimental results. In our experiments, we use 10 small(sub) neural networks, where each has 2 hidden layers. 

\textbf{Implementation Details:}
Our Bayesian Neural Network is a sum of 10 small neural networks and each small network consists of 3 layers. An input feature vector is passed through 10 sub-neural networks followed by addition operation to give a final scalar output. For each sub-neural network, we use 2 to 5 neurons for the first hidden layer and 5 to 20 neurons for the second hidden layer. We train the Bayesian Neural Network with batch size $=100$ and $0.01$ initial learning rate with exponential decay until the validation error converges. In order to pick sparse interpretable variables, we impose group Lasso for the first layer with respect to each input neuron, which associates with sub neural network. The group Lasso penalty hyper-parameter depends on the sparsity and addition structure. In our experiments, it ranges from $0.001$ to $1.0$. 

\textbf{AddNN learns the sparse additive structure:} To show our Bayesian additive neural network can learn the sparse addition structure of the function and the interaction, we provide one example of learning Friedman function $f_1$. We plot the learned matrix of the input layer and the first hidden layer, which can be seen in Fig \ref{fig:finteract}.
\begin{figure}
	\includegraphics[width=0.9\textwidth]{../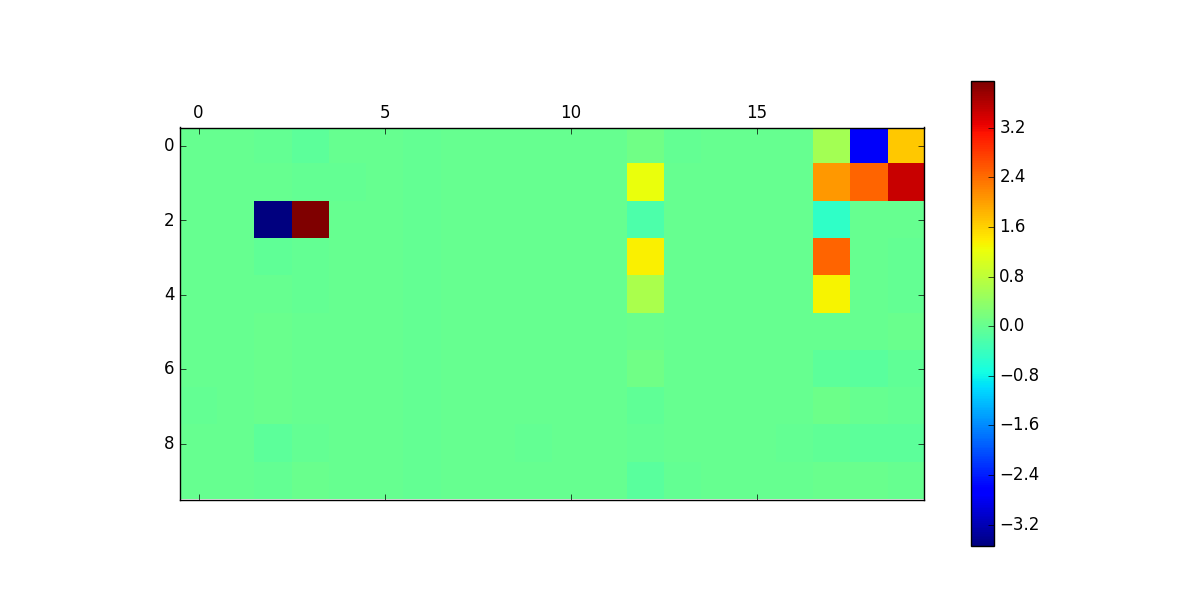}
	\caption{\label{fig:finteract} AddNN learns the additive structure of the function $f_1$. The rows of visualized matrix correspond to the neurons of the input layer and the columns correspond to the $10$ sub neural networks, each has 2 hidden units. The sparsity of the matrix demonstrates the sparse interaction between the neurons of the input layer.}		
\end{figure}

\textbf{The complete comparison for benchmark datasets:} We train a Bayesian additive neural network on benchmark datasets and evaluate each model on all benchmark datasets. Table \ref{tab:realdata2} shows that our proposed model (compact) offers favorable performance comparing with other methods for prediction accuracy. Due to space limitation, we only include the MC dropout baseline in the main body.

\begin{table}
	\caption{Average test performance in RSME and Standard Errors for AddNN(ours), dropout uncertainty (MC dropout), deep Gaussian process (DGP 5) and probabilistic back-propagation(PBP) on benchmarks. Dataset size($N$) and input dimensionality($Q$) are also given.}
	\label{tab:realdata2}
	\centering
	\begin{tabular}{|c|c|c|c|c|c|c|}
		\hline
			 & $N$ & $Q$ & AddNN & MC dropout\cite{Gal16} &DGP 5 \cite{Sal17} & PBP\cite{hernandez2015probabilistic}\\
		\hline
		Boston & $506$ & $13$& $3.03\pm 0.12$ & $2.97 \pm 0.19$ & $2.92\pm 0.17$ & $3.01\pm 0.18$ \\
		\hline
		Concrete &$1030$ &$8$ & $5.18 \pm 0.14$ & $5.23 \pm 0.12$ & $5.65\pm 0.10$ & $5.67 \pm 0.09$\\
		\hline
		Energy & $768$ & $8$ & $0.65 \pm 0.03$ & $1.66 \pm 0.04$ & $0.47\pm 0.01$ & $1.80\pm 0.05$\\
		\hline
		Kin8nm & $8192$ & $8$ & $0.07 \pm 0.00$ & $0.10 \pm 0.00$ & $0.06\pm 0.00$ & $0.10\pm 0.00$\\
		\hline
		Naval & $11934$ & $16$ & $0.01 \pm 0.00$ & $0.01 \pm 0.00$&$0.00\pm 0.00$&$0.01\pm 0.00$\\
		\hline
		Power & $9568$ &$4$ & $4.04\pm 0.03$ & $4.02 \pm 0.04$ &$3.68\pm 0.03$ &$4.12 \pm 0.03$\\
		\hline
		Protein & $45730$ &$9$ & $4.07\pm 0.01$ & $4.36\pm 0.01$&$3.72\pm 0.04$&$4.73\pm 0.01$\\
		\hline
		Wine & $1599$ & $11$ & $0.66\pm 0.01$ & $0.62 \pm 0.01$ &$0.63\pm 0.01$ &$0.64\pm 0.01$\\
		\hline
	\end{tabular}
\end{table}

\end{document}